\documentclass[10pt,letterpaper]{article}
\usepackage{naaclhlt2016}
\usepackage{stfloats}

\usepackage{natbib}
\usepackage[utf8]{inputenc} 
\usepackage[T1]{fontenc}    
\usepackage{hyperref}       
\usepackage{url}            
\usepackage{booktabs}       
\usepackage{amsfonts}       
\usepackage{nicefrac}       
\usepackage{microtype}      
\usepackage{xcolor} 
\usepackage{amsmath}

\usepackage{amssymb}
\usepackage{mathtools}
\usepackage{amsthm}
\usepackage{graphicx}
\usepackage{subcaption}
\usepackage{multirow}
\usepackage{float}

\usepackage{microtype}
\usepackage{graphicx}
\usepackage{subcaption}
\usepackage{booktabs} 
\usepackage{microtype}
\usepackage{setspace}
\usepackage{hyperref}
\usepackage{xcolor}
\usepackage{soul}
\usepackage{tabularx}

\theoremstyle{plain}
\newtheorem{theorem}{Theorem}[section]

\theoremstyle{definition}

\newtheorem{assumption}[theorem]{Assumption}
\theoremstyle{remark}

\usepackage{algorithm}
\usepackage{algorithmic}
\usepackage[normalem]{ulem}

\newcommand{\squishlist}{
   \begin{list}{$\bullet$}
    { \setlength{\itemsep}{-1pt}      \setlength{\parsep}{3pt}
      \setlength{\topsep}{1pt}       \setlength{\partopsep}{0pt}
      \setlength{\leftmargin}{1.5em} \setlength{\labelwidth}{1em}
      \setlength{\labelsep}{0.5em} } }
\newcommand{\squishend}{
    \end{list}  }
    
\naaclfinalcopy 
\def\naaclpaperid{***} 

\title{TriOpt: A Scalable Algorithm for Linear Causal Discovery} 

\author{Rafat Ashraf Joy\\
	    University of Illinois Chicago\\
	    {\tt rjoy4@uic.edu}
	  \And
	Elena Zheleva\\
  	University of Illinois Chicago\\
	    {\tt ezheleva@uic.edu}}

\date{}

\begin{document}

\maketitle

\begin{abstract}
Learning causal relations from observational data is challenging because the graph search space grows super-exponentially with the number of variables. Ordering-based methods reduce this space by first identifying the topological ordering, whereas continuous optimization methods explore most likely regions of the space by casting DAG learning as a differentiable objective with an acyclicity constraint. Despite their conceptual appeal, both paradigms face significant scalability limitations in high-dimensional settings, restricting their practical applicability. In this work, we introduce a new formulation for linear causal discovery that tightly integrates these two paradigms to achieve substantial gains in scalability without sacrificing accuracy. Our approach, TriOpt, decomposes the problem into two efficient stages. First, it recovers the topological ordering by exploiting the Sherman–Morrison rank-1 downdate together with the additive structure of linear kernels, enabling fast and scalable ordering estimation. Second, given this ordering, we reformulate structure learning as a convex continuous optimization problem that entirely avoids the need for enforcing costly acyclicity constraints. We theoretically show that, under the true ordering, TriOpt exactly recovers the underlying linear DAG. Empirically, across synthetic, semi-synthetic, and real-world datasets, TriOpt achieves orders-of-magnitude speedups over state-of-the-art linear causal discovery methods in high-dimensional regimes, while maintaining comparable or superior accuracy. 
\end{abstract}

\section{Introduction}
Causal discovery, the process of learning graphs that capture cause-and-effect relationships and explain the underlying causal mechanism of data generation, is a fundamental problem in machine learning. It has wide range of applications in molecular network analysis \citep{kelly2022review}, epidemiology \citep{10.1093/ije/dyac135} and recommender systems \citep{10.1145/3680296}. While the gold standard of learning cause-and-effect relationships is performing randomized controlled trials, such interventional experiments are sometimes infeasible due to cost, ethical concerns, or mere impossibility. Thus, many research studies have focused on learning these relationships from observational data.

Ordering-based and continuous-optimization methods have recently emerged as important paradigms for causal discovery, often outperforming traditional approaches. Ordering-based methods such as SCORE \citep{pmlr-v162-rolland22a}, CaPS \citep{xu2024ordering} and DiffAN \citep{sanchez2023diffusion} first obtain a topological ancestral order via score matching, then use postprocessing methods such as CAM \citep{buhlmann2014cam} to prune the spurious edges. Some of these methods utilize kernelized Stein discrepancy based score matching for ordering. In this approach, the kernel matrix and its inverse must be recomputed each time a variable is identified as a leaf, leading to total computational costs of $\mathcal{O}(d^2n^2)$ and $\mathcal{O}(dn^3)$, respectively. The biggest disadvantage of ordering-based methods is that the postprocessing step can be prohibitively slow in high-dimensional settings and it is much more computationally expensive than the ordering step. The computational complexity of the CAM pruning is 
$\mathcal{O}\!\left(d \cdot r^{(n,d)}\right)$, where $r^{(n,d)}$ denotes the complexity of training a generalized additive model.

On the other hand, continuous optimization based methods like NOTEARS \citep{zheng2018dags}, GOLEM \citep{ng2020role} and DAGMA \citep{bello2022dagma} formulate the problem as a continuous optimization task. One key advantage of these methods is superior optimization performance as well as reduced computational cost when leveraging GPUs. However, on the downside, these methods need to enforce an acyclicity constraint with time complexity $\mathcal{O}(d^3)$, which requires costly matrix exponentials and is often susceptible to vanishing gradients.  


\textbf{Contributions}. In this work, we introduce $TriOpt$ (Three Fold Optimization), a framework which combines the strengths of ordering-based and continuous optimization causal discovery while mitigating their weaknesses to make causal discovery drastically more scalable for higher-dimensional settings. $TriOpt$ first efficiently finds the topological ordering by score matching via linear kernel, then instead of pruning, it solves a novel continuous optimization problem. 

We focus on linear causal discovery, DAG learning under linear SEM assumptions, which is widely used in practice and can work quite well in many real-world settings~\citep{ng2020role,zheng2018dags}. Even when the underlying system has nonlinear relationships, linear causal discovery can sometimes give good approximations with lower complexity and better interpretability than nonlinear causal discovery methods~\citep{fujiwara2023causal}.

To accelerate causal discovery for linear models, 
we drastically reduce the computational overhead of causal ordering without sacrificing accuracy. Previous Stein ordering methods incur immense runtime bottlenecks by naively recomputing and re-inverting the kernel matrix upon every feature removal. $TriOpt$ circumvents this computationally expensive process by exploiting the additivity property of linear kernels and leveraging Sherman Morrison rank-1 downdate\citep{Sherman1950AdjustmentOA}.
    In the ordering phase, $TriOpt$ brings down the cost of kernel matrix and its inverse computation from $\mathcal{O}(d^2n^2)$ and $\mathcal{O}(dn^3)$ to $\mathcal{O}(dkn^2)$ and $\mathcal{O}(dn^2 + kn^3)$ respectively - where $k$ is much smaller than $d$. Our empirical results show that these two contributions make the ordering phase more accurate and much more time efficient. 
    
Moreover, we establish a novel theoretical result that, when the ground-truth causal ordering is known, then optimizing the $TriOpt$ objective under a hard upper-triangular constraint (enabled by the ordering step) provably recovers the true DAG in the population setting. 
Using this, we formulate the post-ordering step as a novel optimization problem inspired by NOTEARS \citep{zheng2018dags} but without the computationally expensive acyclicity constraint. A second, equally important point is that dropping the acyclicity constraint also renders the optimization problem convex - which guarantees a global minimum. Although $TriOpt$ optimization has the same $\mathcal{O}(d^3)$ complexity class as NOTEARS and other continuous optimization methods, empirically it is much faster as its cost comes from pure matrix multiplications rather than matrix exponentials. 
    
With these optimizations, our proposed $TriOpt$ framework achieves substantial runtime improvements over both state-of-the-art ordering-based and continuous optimization approaches, while attaining better or comparable accuracy in high-dimensional linear causal discovery tasks. 





\section{Related Work}

\textbf{Ordering-Based Causal Discovery}. Ordering-based methods estimate the causal graph in two steps.
First, they find a topological order of the nodes. In this order, a node can only be a parent of nodes that follow.
Then they build the graph using this order and pruning spurious edges. 
Assuming Gaussian noise and that each function is nonlinear and twice continuously differentiable, SCORE \citep{pmlr-v162-rolland22a} introduces an
identifiability criterion based on the variance of diagonal Hessian entries. It picks as the leaf the remaining variable whose score variance is the least, since a true leaf is least influenced by other variables. CaPS \citep{xu2024ordering}  identifies a leaf variable as the one whose expected diagonal Hessian of the log-density score is largest, which under the ANM assumption corresponds to having no children. Once the order is obtained, both SCORE and CaPS employes CAM pruning~\citep{buhlmann2014cam} to get the final DAG. Unlike SCORE's leaf identification criterion which holds only for nonlinear additive models, CaPS leaf identification criterion is applicable in both linear and nonlinear relations. 
Both SCORE and CaPS utilize an RBF kernel via kernelized Stein discrepancy for score matching. However, a common drawback of these methods is the need to recompute the kernel matrix and kernel inverse every time a leaf node is removed in Stein ordering, which we address in our work. 

\textbf{Continuous Optimization Causal Discovery}. NOTEARS \citep{zheng2018dags} is the first to formulate DAG learning as a continuous optimization problem. The objective function consists of a least squares loss with $l1$ penalty and a hard DAG constraint. The objective is optimized using
the Augmented Lagrangian method with L-BFGS.  
GOLEM \citep{ng2020role} shows that replacing NOTEARS' least-squares objective with a likelihood-based objective allows using only soft sparsity and DAG constraints, under mild assumptions. 
This eliminates the need for a hard DAG constraint and makes the optimization problem easier to solve. DAGMA \citep{bello2022dagma} introduced a new acyclicity constraint, which is based on the log-determinant of the weighted adjacency matrix. This new acyclicity constraint enables detecting larger cycles and better behaved gradients. 
However, the constraint also introduces a significant computational overhead, which impacts the efficiency and scalability of continuous optimization-based techniques.
\section{Preliminaries}



\subsection{Problem Definition: Linear Causal Discovery}
We aim to recover the causal DAG \(G=(V,E)\) over \(d\) variables \(x=(x_1,\dots,x_d)\) from \(n\) i.i.d.\ observational samples \(X\in\mathbb{R}^{n\times d}\). The DAG defines an SCM \(x_i := f_i(\mathrm{Pa}_G(x_i))+\varepsilon_i\), where the noise variables \(\{\varepsilon_i\}_{i=1}^d\) are mutually independent. In this work, we consider linear DAG models, where \(x_i=\sum_{j\in \mathrm{Pa}_G(i)} B_{j,i}x_j+\varepsilon_i\), equivalently \(X=B^\top X+\varepsilon\). Our goal is to estimate the weighted adjacency matrix \(B\in\mathbb{R}^{d\times d}\), which encodes the causal graph \(G\).

\subsection{Score Matching}
Score (or Stein's score) is defined as the gradient of the log probability with respect to the data itself $S(x) = \nabla_{x}logp_d(x)$. Score matching \citep{JMLR:v6:hyvarinen05a} aims to estimate the score function $\nabla_x \log p_d(x)$ of an unknown probability
density $p_d(x)$ only using samples $x \sim p_d$, without requiring an explicit parametric form of the
density. Let $s_\theta(x)$ denote a parametric score model. The score matching objective is:
\begin{equation}
\mathcal{L}_{\mathrm{SM}}(s_\theta)
= \frac{1}{2}\,\mathbb{E}_{p_d}\!\left[\left\| s_\theta(X) - \nabla_x \log p_d(X) \right\|^2 \right].
\label{eq:sm_original}
\end{equation}

Since the true score $\nabla_x \log p_d(x)$ is unknown, this objective cannot be
evaluated directly. However, by applying integration by parts and assuming boundary
conditions, the loss can be rewritten such that it does not depend on
$\nabla_x \log p_d(x)$ \citep{JMLR:v6:hyvarinen05a}:
\begin{equation}
\mathcal{L}_{\mathrm{SM}}(s_\theta)
= \frac{1}{2}\,\mathbb{E}_{p_d}\!\left[\| s_\theta(X) \|^2 \right]
+ \mathbb{E}_{p_d}\!\left[ \nabla_x \cdot s_\theta(X) \right].
\label{eq:sm_rewritten}
\end{equation}
SCORE \citep{pmlr-v162-rolland22a} adapts the method developed in~\citep{li2018gradient} to estimate the score at the sample points which is approximating 

$\textbf{G} \equiv (\nabla \log p(\mathbf{x}^1), \ldots, \nabla \log p(\mathbf{x}^n))^T \in \mathbb{R}^{n\times d}$.

This estimator relies on Stein's identity~\citep{stein1972bound}. For any test function $\mathbf{h}:\mathbb{R}^d \rightarrow \mathbb{R}^{d'}$ satisfying boundary conditions, we have:
\begin{equation}\label{eq:stein_identity}
    \mathbb{E}_p[\mathbf{h}(\mathbf{x}) \nabla \log p(\mathbf{x})^T + \nabla \mathbf{h}(\mathbf{x})] = \mathbf{0}.
\end{equation}

\subsection{Topological Ordering}
Obtaining the topological ordering is the first step in ordering based causal discovery methods. Given a causal directed acyclic graph (DAG) $\mathcal{G}$, a causal order
$\pi : \mathcal{V} \to \mathcal{V}$ is a permutation of the nodes
$\mathcal{V}=\{1,\ldots,D\}$ such that $\pi(i) < \pi(j)$ for every directed edge
$i \to j$ in $\mathcal{G}$. 

CaPS~\citep{xu2024ordering} introduces a discriminant criterion that applies to both linear and nonlinear causal relations. They establish sufficient conditions for identifiability that do not rely on functional nonlinearity.

\begin{assumption}
\label{assumption-sufficient_conditions}
\textbf{(Sufficient conditions for identifiability).}
The topological ordering of a causal graph is identifiable if at least one of the following conditions holds:
\begin{itemize}
    \item[(i)] \textbf{Non-decreasing noise variance.} For any two nodes $i$ and $j$,
    \[
        \sigma_j \ge \sigma_i \quad \text{whenever} \quad \pi(i) < \pi(j).
    \]
    
    \item[(ii)] \textbf{Non-weak causal effect.} For any non-leaf node $x_j$,
    \[
        \sum_{i \in \mathrm{Ch}(j)} \frac{1}{\sigma_i^2}
        \mathbb{E}\!\left[
            \left(
            \frac{\partial f_i}{\partial x_j}\bigl(pa_i(x)\bigr)
            \right)^2
        \right]
        \;\ge\;
        \frac{1}{\sigma_{\min}^2} - \frac{1}{\sigma_j^2}.
    \]
\end{itemize}
\end{assumption}

Here, $\sigma_{\min}$ denotes the minimum noise variance across all nodes. CaPS~\citep{xu2024ordering} introduces the following theorem. Let
\[
j \;=\; \arg\max \Bigl(
\operatorname{diag}\Bigl(
\mathbb{E}\Bigl[\tfrac{\partial s(x)}{\partial x}\Bigr]
\Bigr)
\Bigr).
\]
Then \(x_j\) is a leaf node. Here, \(s(x)\) denotes the score function.

\section{Linear Causal Discovery with TriOpt}

To learn the causal graph under linear assumptions, $TriOpt$ first identifies the ancestral ordering via score matching with a linear kernel. Then, it determines specific cause-effect relationships through a convex optimization objective. In the following section, we detail the algorithm and present its theoretical guarantees. Algorithms \ref{alg:stein_ordering} and \ref{alg:triopt} provide the pseudo-code for the ordering and optimization steps, respectively. A detailed discussion of their computational complexity is in Appendix \ref{computational-complexity}.

\subsection{Step 1: Ancestral Ordering} 
In this work, we employ a linear kernel for score matching which allows us to take advantage of its additivity property to avoid recomputing the kernel matrix at every step of the Stein ordering. In contrast, CaPS~\citep{xu2024ordering} requires that the RBF kernel matrix and its inverse are recomputed every time a feature is removed in Stein ordering, making Stein ordering very expensive. For the linear kernel, the test functions are identity map $\mathbf{h}(\mathbf{x}) = \mathbf{x}$ (assuming centered data). Under the Monte Carlo approximation of Eq.~\eqref{eq:stein_identity}, the Stein gradient estimator is given by the regularized solution:
\begin{align}
    \hat{\textbf{G}}^{\text{Stein}} &= -(\textbf{K} + \eta \textbf{I})^{-1} \langle \nabla, \textbf{K}\rangle, \label{eq:stein_estimator}
\end{align}
where $\textbf{K} \in \mathbb{R}^{n \times n}$ is the kernel matrix and $\langle \nabla, \textbf{K}\rangle \in \mathbb{R}^{n \times d}$ is the kernel gradient matrix.

For the linear kernel, the kernel matrix is the Gram matrix of the data $\textbf{K} = \mathbf{X}\mathbf{X}^\top$. The gradient term becomes:
\vspace{-5pt}
\[
(\langle \nabla, \mathbf{K} \rangle)_{ij}
= \sum_{k=1}^n \nabla_{\mathbf{x}^k_j} \kappa(\mathbf{x}^i, \mathbf{x}^k)
= \sum_{k=1}^n \mathbf{x}^i_j
= n\,\mathbf{x}^i_j .
\]

In matrix notation, this is $\langle \nabla, \textbf{K}\rangle = n\mathbf{X}$. Consequently, the linear closed-form estimator is:

\begin{equation}\label{eq:linear_stein}
    \hat{\textbf{G}}^{\text{Stein}} = -(\mathbf{X}\mathbf{X}^\top + \eta \textbf{I})^{-1} (n\mathbf{X}).
\end{equation}

To estimate the Hessian diagonal $\mathbf{H}$ from the Stein score estimator $\hat{\mathbf{G}}^{\text{Stein}}$, we leverage the second-order Stein identity. For a density $p(\mathbf{x})$, the expected Hessian of the log-likelihood relates to the score function $\mathbf{s}(\mathbf{x}) = \nabla \log p(\mathbf{x})$ via:
\begin{equation}
    \mathbb{E}_{p(\mathbf{x})} \left[ \nabla^2 \log p(\mathbf{x}) \right] = -\mathbb{E}_{p(\mathbf{x})} \left[ \mathbf{s}(\mathbf{x}) \mathbf{s}(\mathbf{x})^\top \right].
\end{equation}

The resulting Hessian diagonal matrix $\mathbf{H} \in \mathbb{R}^{n \times d}$ is computed as:
\begin{equation}
    \mathbf{H} = -(\hat{\mathbf{G}}^{\text{Stein}} \odot \hat{\mathbf{G}}^{\text{Stein}})
\end{equation}
where $\odot$ denotes the Hadamard product. The causal order is then determined by iteratively identifying the leaf node $j^*$ that maximizes the empirical mean of the Hessian proxy~\citep{xu2024ordering}:
\begin{equation}
    j^* = \arg\max_{j \in \mathcal{S}} \frac{1}{n} \sum_{i=1}^n H_{ij}.
\end{equation}

\textbf{Inverse Kernel via Sherman Morrison Downdate}.
During the ordering phase, each time a leaf node is eliminated, the kernel inverse $(\textbf{K} + \eta \textbf{I})^{-1}$ has to be recomputed. However, this incurs $\mathcal{O}(n^3)$ time complexity each time, where n is the number of samples. 

Sherman Morrison rank-1 downdate \citep{Sherman1950AdjustmentOA,elden2010updating} allows us to recompute the kernel inverse efficiently with $\mathcal{O}(n^2)$ time complexity. It exploits the fact that when the kernel is linear, eliminating a leaf node corresponds to a low-rank
modification. 

Let $
\mathbf{A} = \mathbf{K} + \eta \mathbf{I},
$. 
The removal of a leaf node can be expressed as a rank-1 downdate
$\mathbf{A}' = \mathbf{A} + \mathbf{u}\mathbf{v}^\top$ 
where \(\mathbf{u}, \mathbf{v} \in \mathbb{R}^n\).
By the Sherman--Morrison formula, the inverse of \(\mathbf{A}'\) is given by
\[
(\mathbf{A} + \mathbf{u}\mathbf{v}^\top)^{-1}
=
\mathbf{A}^{-1}
-
\frac{\mathbf{A}^{-1}\mathbf{u}\mathbf{v}^\top\mathbf{A}^{-1}}
{1 + \mathbf{v}^\top\mathbf{A}^{-1}\mathbf{u}},
\]

 One downside of this method is the denominator \(1 + \mathbf{v}^\top\mathbf{A}^{-1}\mathbf{u}\) can approach zero, which would make the estimate unstable.   
 
\begin{algorithm}[t] 
\small 
\caption{Fast Stein Ordering with Linear Kernel}
\label{alg:stein_ordering}
\begin{algorithmic}[1]
\setstretch{0.9} 
\renewcommand{\algorithmicrequire}{\textbf{Input:}}
\renewcommand{\algorithmicensure}{\textbf{Output:}}

\REQUIRE $X \in \mathbb{R}^{n \times d}$, $\eta_G$; \textbf{Output:} Causal Order $\pi$

\STATE Center $X$; $\pi \leftarrow [\,]$, $\mathcal{S} \leftarrow \{0, \dots, d-1\}$
\STATE $K \leftarrow XX^\top + \eta_G I_n$
\STATE $L \leftarrow \text{Cholesky}(K); \quad K^{-1} \leftarrow (L^\top)^{-1} L^{-1}$

\FOR{$k = 1$ \textbf{to} $d-1$}
    \STATE \textbf{Step 1: Hessian.} $\nabla K \leftarrow nX$; $G \leftarrow -K^{-1} \nabla K$; $H \leftarrow - (G \odot G) $
    \STATE \textbf{Step 2: Selection.} $s \leftarrow \mathbb{E}_{\text{rows}}[H]$; $l \leftarrow \arg\max s$; $\pi \leftarrow \pi \cup \{\mathcal{S}[l]\}$
    
    \STATE \textbf{Step 3: SM Downdate.} Let $u$ be col $l$ of $X$; $v \leftarrow K^{-1}u$; $\alpha \leftarrow 1 - u^\top v$
    \IF{$|\alpha| < 10^{-7}$ or periodic re-inversion}
        \STATE Recompute $K \leftarrow XX^\top + \eta_G I_n$ and $K^{-1}$
    \ELSE
        \STATE $K^{-1} \leftarrow K^{-1} + \frac{vv^\top}{\alpha}$
    \ENDIF
    \STATE Remove col $l$ from $X$ and element $l$ from $\mathcal{S}$
\ENDFOR
\STATE $\pi \leftarrow \text{reverse}(\pi \cup \{\mathcal{S}[0]\})$; \textbf{return} $\pi$
\end{algorithmic}
\end{algorithm}

\subsection{Step 2: Contiunuous Optimization}
Under a fixed ordering, 
the adjacency matrix is guaranteed to be strictly upper triangular. This allows us to 
define the causal graph search as a convex optimization problem without an acyclicity constraint: 
\vspace{-8pt}
\[
F(W) = \ell(W; \mathbf{X}) + \lambda \lVert W \rVert_{1}
      = \frac{1}{2n} \lVert \mathbf{X} - \mathbf{X}W \rVert_{F}^{2}
        + \lambda \lVert W \rVert_{1}
\]
subject to $W_{ij} = 0$ for all $i \ge j$

We implemented a Gram-matrix–optimized variant of this optimization while enforcing a hard upper triangular constraint: 
\[
W \leftarrow W \odot M,
\]
where $M = \mathrm{triu}(\mathbf{1}, 1)$ is a binary mask.
We precompute the  Gram matrix $G = \frac{1}{n} X^\top X$ and its trace $\mathrm{Tr}(G)$. So $\ell(W; \mathbf{X})$ can be expressed as follows: 
\[
\begin{aligned}
\ell(W; \mathbf{X})
&= \frac{1}{2n} \| X - XW \|_F^2 \\
&= \frac{1}{2} \left[ \mathrm{Tr}(W^\top G W) - 2\,\mathrm{Tr}(G W) + \mathrm{Tr}(G) \right].
\end{aligned}
\]

To encourage sparsity, we apply a smooth $\ell_1$ (Charbonnier) regularization:
\[
\|W\|_{1,\epsilon}
= \sum_{i,j} \sqrt{W_{ij}^2 + \epsilon^2},
\]
where $\epsilon > 0$ ensures differentiability. 
The final optimization objective is as follows:
\[
\min_W \;
\frac{1}{2} \mathrm{Tr}(W^\top G W)
- \mathrm{Tr}(G W)
+ \frac{1}{2} \mathrm{Tr}(G)
+ \lambda \|W\|_{1,\epsilon} 
\]
subject to $W_{ij} = 0$ for all $i \ge j$.

\begin{algorithm}[htbp]
\caption{Upper Triangular Optimization}
\label{alg:triopt}
\begin{algorithmic}[1]
\REQUIRE Data $X \in \mathbb{R}^{n\times d}$, Ordering $\pi$.
\ENSURE Adjacency matrix $\widehat{B} \in \mathbb{R}^{d\times d}$. 

\STATE Reorder data: $X_{\pi} \leftarrow X P_\pi$
 
\STATE Solve optimization with upper triangular constraint:
\\

 $\widehat{W}_{\pi} \leftarrow 
\arg\min_{W}
\;
\frac{1}{2n}
\left\| X_{\pi} - X_{\pi} W \right\|_F^2 +  \lambda \lVert W \rVert_{1,\epsilon}$

\STATE Recover original variable order:
\STATE $\widehat{B} \leftarrow P_\pi \widehat{W}_{\pi} P_\pi^\top$

\STATE \textbf{return} $\widehat{B}$
\end{algorithmic}
\end{algorithm}

\subsection{Identifiability} 
The correctness of the ordering step relies on the identifiability criteria (non-decreasing noise variance or non-weak causal effect) established for score-matching methods like CaPS, which we detailed in Assumption \ref{assumption-sufficient_conditions}. 
For the optimization step, we introduce a theorem according to which optimizing the TriOpt objective under a hard upper-triangular constraint is guaranteed to recover the true DAG if the ground-truth causal ordering is known.  
\begin{theorem}[Exact recovery of a linear DAG under known ordering]
\label{thm:notears_known_order_improved}
Suppose, $X = (X_1, \dots, X_d)^T \in \mathbb{R}^d$ is generated by the linear structural equation model:
\[
X = B^* X + Z,
\]
the following assumptions hold:
\begin{enumerate}
    \item  $B^* \in \mathbb{R}^{d \times d}$ is strictly lower triangular (consistent with the known topological ordering).
    \item  The noise components $Z = (Z_1, \dots, Z_d)^T$ are mutually uncorrelated with zero mean. That is, $\mathbb{E}[Z] = 0$ and the covariance matrix is diagonal: $\mathrm{Cov}(Z) = \Omega = \mathrm{diag}(\omega_1^2, \dots, \omega_d^2)$.
    \item  The noise variances satisfy $\omega_j^2 > 0$ for all $j \in \{1, \dots, d\}$.
\end{enumerate}

Then the least squares objective for $W \in \mathbb{R}^{d \times d}$:
\[
\min_{W \in \mathbb{R}^{d \times d}} \;\; \frac{1}{2} \mathbb{E} \|X - W^TX\|_2^2
\quad \text{subject to} \quad W_{ij} = 0 \;\; \forall \, i \geq j.
\]
has a unique global minimizer $W^* = (B^*)^T$. 
\end{theorem}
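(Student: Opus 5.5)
The plan is to exploit the fact that the constraint $W_{ij}=0$ for $i\ge j$ makes $W^\top$ strictly lower triangular, the same pattern as $B^*$. Under this constraint the population objective decouples into $d$ independent least-squares regressions of each $X_j$ on its predecessors $X_{<j}=(X_1,\dots,X_{j-1})$. I would then show that each regression has a unique minimizer and that this minimizer is the $j$-th row of $B^*$. Throughout I assume finite second moments, which is implicit in the statement since $\mathrm{Cov}(Z)$ is assumed to exist.

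\textbf{Step 1 (decoupling).} Write $w_j\in\mathbb{R}^{j-1}$ for the free entries $(W_{1j},\dots,W_{j-1,j})$ of column $j$ of $W$. The $j$-th coordinate of $X-W^\top X$ is $X_j-w_j^\top X_{<j}$, so
\[
\tfrac12\mathbb{E}\|X-W^\top X\|_2^2=\sum_{j=1}^d \tfrac12\mathbb{E}\bigl(X_j-w_j^\top X_{<j}\bigr)^2 .
\]
Each summand depends only on its own block $w_j$, so it suffices to minimize each term separately. For $j=1$ the block is empty and there is nothing to prove.

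\textbf{Step 2 (strict convexity).} Since $B^*$ is strictly lower triangular, $I-B^*$ is unit lower triangular and hence invertible, so $X=(I-B^*)^{-1}Z$. This gives
\[
\Sigma=\mathrm{Cov}(X)=(I-B^*)^{-1}\Omega(I-B^*)^{-\top},
\]
which is positive definite by assumption 3 ($\Omega\succ 0$). Every leading principal submatrix $\Sigma_{<j}=\mathrm{Cov}(X_{<j})$ is therefore positive definite. The $j$-th summand equals
\[
\tfrac12\bigl(\Sigma_{jj}-2w_j^\top\Sigma_{<j,j}+w_j^\top\Sigma_{<j}w_j\bigr),
\]
a strictly convex quadratic in $w_j$. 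Its unique minimizer is the solution of the normal equations $\Sigma_{<j}w_j=\Sigma_{<j,j}$, equivalently of the orthogonality condition $\mathbb{E}\bigl[X_{<j}(X_j-w_j^\top X_{<j})\bigr]=0$.

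\textbf{Step 3 (the truth satisfies the normal equations).} This is the only step with real content. Set $w_j=(B^*_{j,1},\dots,B^*_{j,j-1})^\top$, the nonzero part of row $j$ of $B^*$. Then $X_j-w_j^\top X_{<j}=Z_j$. The key point is that $Z_j$ is uncorrelated with $X_{<j}$. To see this, note that by the triangular structure of $(I-B^*)^{-1}$, each $X_i$ with $i<j$ is a linear combination of $Z_1,\dots,Z_i$ only, which can be confirmed by induction on $i$ through the recursion $X_i=\sum_{k<i}B^*_{ik}X_k+Z_i$. Since the noises are mutually uncorrelated with zero mean, $\mathbb{E}[X_iZ_j]=0$ for all $i<j$. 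So the orthogonality condition holds, and by uniqueness in Step 2 this $w_j$ is the minimizer.

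Assembling the columns gives the unique global minimizer $W^*$ with $(W^*)^\top=B^*$, i.e.\ $W^*=(B^*)^\top$. Only uncorrelatedness, not independence, is used, which matches the hypotheses. The main care needed is in Step 3's inductive claim about which noises each $X_i$ depends on, and in making the positive-definiteness argument precise so that uniqueness holds.
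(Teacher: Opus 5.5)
Your proposal is correct and follows essentially the same route as the paper. You decompose the constrained objective into $d$ column-wise regressions, use $\Sigma_X=(I-B^*)^{-1}\Omega(I-B^*)^{-\top}\succ 0$ to get a unique minimizer per block, and use the lower-triangular reduced form to show $\mathbb{E}[X_{<j}Z_j]=0$. The only cosmetic difference is that the paper substitutes the structural equation into the closed-form OLS solution and shows the bias term $\Sigma_{\mathrm{pre}(j)}^{-1}\mathbb{E}[X_{\mathrm{pre}(j)}Z_j]$ vanishes, whereas you check the normal equations directly; the two are equivalent.
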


 We prove the theorem by showing that the invertibility of $(I - B^*)$ implies a positive definite covariance $\Sigma_X$, which guarantees that the objective function decomposes into $d$ strictly convex least-squares sub-problems. Because the inverse matrix $(I - B^*)^{-1}$ remains lower triangular, any variable $X_i$ depends strictly on current and past noise terms. This structure ensures that the noise $Z_j$ is orthogonal to the predecessors $X_{\mathrm{pre}(j)}$. Thus, the bias term in the OLS estimator becomes zero. Consequently, the unique minimizer for each sub-problem exactly recovers the true structural parameters. The full proof can be found in Appendix \ref{theorem1}.

\section{Experiments}
\subsection{Experimental setup} 

\textbf{Baselines}
This study compares TriOpt with three state-of-the-art causal discovery methods, including three linear approaches (NOTEARS~\citep{zheng2018dags}, GOLEM~\citep{ng2020role}, DAGMA~\citep{bello2022dagma}).  
In our ablation studies, we also consider replacing the TriOpt ordering and postprocessing step with equivalents of the closely-related non-linear method CaPS~\citep{xu2024ordering}. 
We consider two different variants of TriOpt, with and without the Sherman-Morrison downdate.

\textbf{Synthetic Data} Our synthetic data generation protocol follows GOLEM \citep{ng2020role} and prior work \citep{zheng2018dags}. Ground-truth DAGs are generated from two random graph families: Erdős–Rényi (ER) graphs with edge probability $2e/(d^2-d)$ for $e\in{d,2d,4d}$ (ER1/ER2/ER4), and Scale-Free (SF) graphs generated via the Barabási–Albert model with $k=4$ (SF4). Edge weights are sampled uniformly from 

$[-2,-0.5]\cup[0.5,2]$.
We simulate $n\in\{2000,5000\}$ samples from linear SEMs with dimensionality $d\in\{100,200,500,1000\}$ and noise drawn from Gaussian, Gumbel, or Exponential distributions.

\textbf{Semi-synthetic Data} We evaluate TriOpt against baseline methods on two semi-synthetic datasets: \textbf{ARTH150}, a gene regulatory network benchmark from the bnlearn repository~\citep{jstatsoft09} based on \emph{Arabidopsis thaliana}~\citep{opgen2007correlation}, with $d=107$ nodes, $e=150$ edges, and $n=10{,}000$ samples generated using a linear structural equation model. We also consider \textbf{Syntren}~\citep{van2006syntren}, which consists of 10 simulated transcriptional networks, each with 500 observations and an underlying DAG with $d=20$ nodes and $e\in\{20,\ldots,25\}$ edges.

\textbf{Real World Data} We also ran experiments on a real world dataset
that measures the expression levels of proteins and phospholipids in human cells~\citep{sachs2005causal}. 
The dataset has \(d = 11\) variables (cell-signaling markers) and \(n = 853\) observational 
samples.

\paragraph{Metrics} For evaluation, the estimated graphs are assessed via the normalized Structural Hamming Distance (SHD), and F1 Score and ancestor Adjustment Identification Distance (AID) \citep{henckel2024adjustment}, with all metrics averaged over 10 independent random simulations.  
Unlike previous studies, we do not report the Structural Intervention Distance (SID). Because in higher dimensions, the computation time of SID is prohibitively high. AID serves as a practically relevant and scalable proxy of SID. The performance of ordering is reported by 
Order divergence~\citep{pmlr-v162-rolland22a} which measures the discrepancy between an estimated
topological ordering $\pi$ and the adjacency matrix of the true causal graph
$\mathcal{A}$. It is defined as
$D_{\mathrm{top}}(\pi, \mathcal{A}) = \sum_{i=1}^{d} \sum_{j : \pi_i > \pi_j} \mathcal{A}_{i,j}.$

\subsection{Results}
\label{main-results}
\textbf{Performance Over Different Graph types and noise}. We use the notation [graph
type][degree] for indicating experiments over different synthetic datasets. Figure \ref{fig:all_metrics_gev} shows the results of TriOpt over ER1, ER2, ER4 and SF4 graphs on equal variance gaussian noise type for metrics runtime, SHD, F1 score and AID on 5000 samples. The full results for all three noise types for 5000 and 2000 samples are reported in Figure \ref{fig:performance_5000} and \ref{fig:performance_2000} in Appendix \ref{full_result}. 
Figure \ref{vary-sparsity} in Appendix \ref{robustness-sparsity} shows how robust both variants of TriOpt are compared to baselines as sparsity increases. 

\begin{figure*}[htbp]
    \centering

    \begin{subfigure}{\linewidth}
        \centering
        \begin{minipage}[c]{0.72\linewidth}
            \centering
            \includegraphics[width=\linewidth]{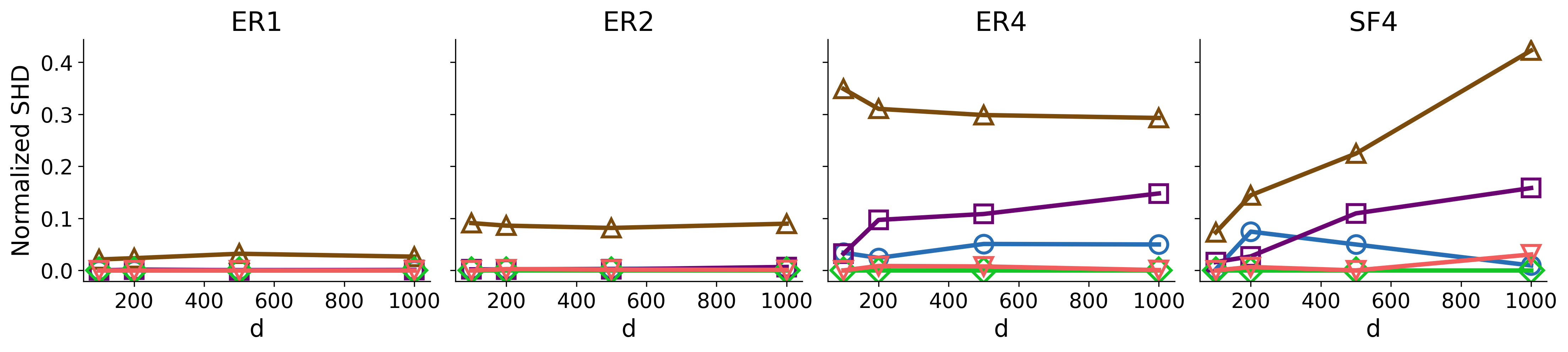}
        \end{minipage}
        \hfill
        \begin{minipage}[c]{0.23\linewidth}
            \caption{Normalized SHD}
            \label{fig:nshd_gev}
        \end{minipage}
    \end{subfigure}


    \begin{subfigure}{\linewidth}
        \centering
        \begin{minipage}[c]{0.72\linewidth}
            \centering
            \includegraphics[width=\linewidth]{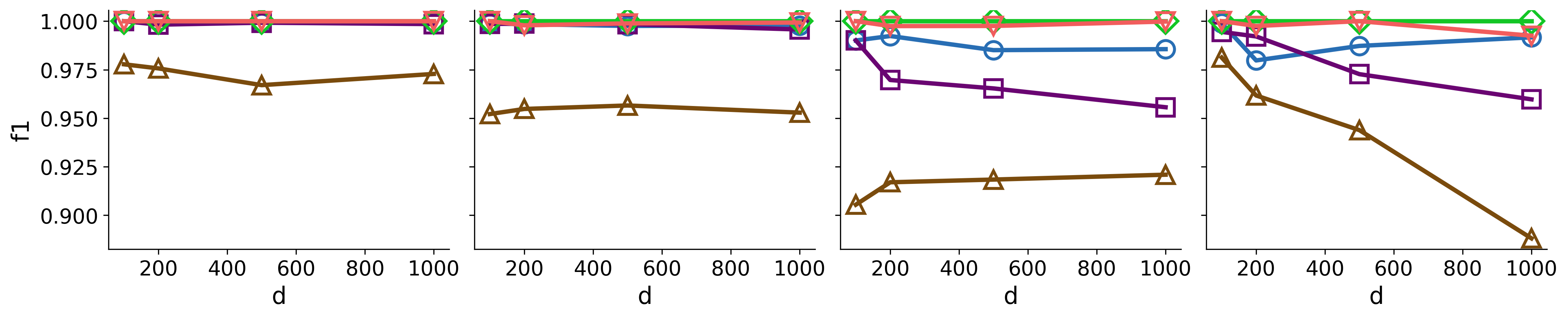}
        \end{minipage}
        \hfill
        \begin{minipage}[c]{0.23\linewidth}
            \caption{F1 Score}
            \label{fig:f1_gev}
        \end{minipage}
    \end{subfigure}


    \begin{subfigure}{\linewidth}
        \centering
        \begin{minipage}[c]{0.72\linewidth}
            \centering
            \includegraphics[width=\linewidth]{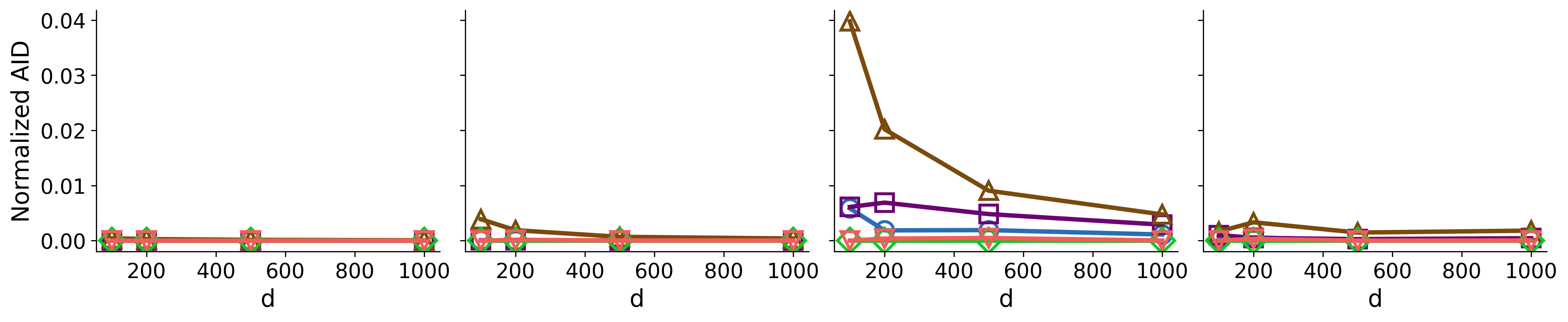}
        \end{minipage}
        \hfill
        \begin{minipage}[c]{0.23\linewidth}
            \caption{Normalized AID}
            \label{fig:naid_gev}
        \end{minipage}
    \end{subfigure}


    \begin{subfigure}{\linewidth}
        \centering
        \begin{minipage}[c]{0.72\linewidth}
            \centering
            \includegraphics[width=\linewidth]{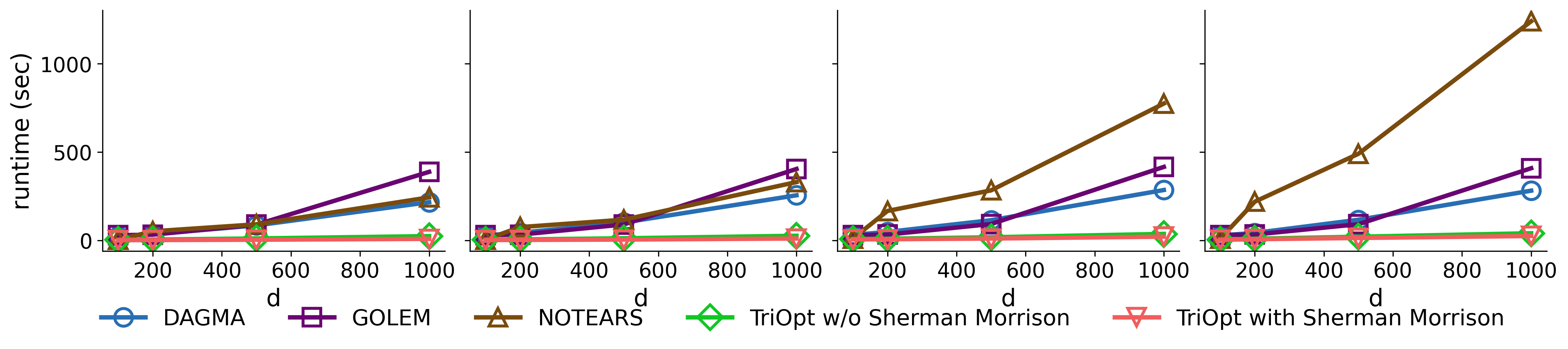}
        \end{minipage}
        \hfill
        \begin{minipage}[c]{0.23\linewidth}
            \caption{Runtime (sec)}
            \label{fig:runtime_gev}
        \end{minipage}
    \end{subfigure}

    \caption{%
        Performance comparison across graph types and degrees
        under \texttt{gaussian\_ev} noise with $n{=}5000$ samples.
        Each row of subplots corresponds to one metric;
        columns correspond to different graph/degree combinations.
    }
    \label{fig:all_metrics_gev}
    \vspace{-10pt}
\end{figure*}

In terms of runtime, both versions of TriOpt clearly outperform baselines (GOLEM, NOTEARS, DAGMA) by a significant margin. As the dimension increases, the difference in runtime becomes more drastic. For instance, in 1000 dimensions, TriOpt without Sherman Morrison is on average 95\% faster than NOTEARS, 92\% faster than GOLEM and 88\% faster than DAGMA. TriOpt with Sherman Morrison is on average 97\% faster than NOTEARS, 96\% faster than GOLEM and 93\% faster than DAGMA.
In terms of SHD, F1 and AID, TriOpt without Sherman morrison downdate has the best performance in all $48$ settings. With Sherman Morrison downdate the performance slightly deteriorates in Exponential, Gumbel noise cases. This is an expected trade-off, as these noise types are skewed and heavy-tailed, which makes Sherman Morrison downdate numerically inaccurate. 

Although the three novel improvements in the TriOpt framework were designed for scalability, two of them also contribute towards better accuracy. 
First, TriOpt's Upper Triangular Optimization in postprocessing intrinsically improves accuracy by rendering the optimization problem convex. Continuous optimization baselines (NOTEARS, DAGMA, GOLEM) need to enforce an acyclicity constraint. This leads to non-convex loss landscapes. Thanks to convexity, TriOpt guarantees convergence to a global minimum for this phase. Thus, TriOpt entirely avoids the local optima that frequently trap baseline methods. Second, using a linear kernel in the ordering phase gives a more accurate estimate of the causal ordering of the variables than an RBF kernel.

\textbf{Sensitivity Analysis of Weight Scale}.
Following \citep{ng2020role} \citep{zheng2018dags}, we study sensitivity to weight scaling on 200 node ER1, ER2, ER4 graphs, where edge weights are sampled uniformly from
$\alpha \cdot ([-2,-0.5] \cup [0.5,2])$
where $\alpha \in \{0.3,0.4,\ldots,1.0\}$. 
Figure \ref{sensitivity-shd} in appendix shows how the normalized SHD varies along different weight scales. It is evident that both variants of TriOpt are the best performant methods across all noise scales. GOLEM and NOTEARS gradually get more accurate as $\alpha$ increases, however, they become unstable for larger values of $\alpha$ (0.9 or 1.0).

\textbf{Sensitivity to Ordering Errors}
\label{sub-sens-ordering}.
We conducted further experiments demonstrating that TriOpt’s sensitivity to ordering errors depends on both the specific type of ordering error and the structure of the underlying graph. We also show the theoretical analysis of ordering mistakes for upper-triangular optimization with a sparsity penalty. The empirical results and theoretical analysis are provided in Appendix \ref{sec-sensitivity-ordering}. Empirical results show that across all three graph types, adjacent swaps and block reversals proved highly robust (F1 remaining at or near 1.000 even at 20\% perturbation), while ancestor swaps and hub swaps caused the most severe degradation - for example, a 20\% ancestor swap on SF4 dropped F1 by 0.698 and a 10\% hub swap on ER4 dropped F1 by 0.542, indicating that structurally meaningful ordering errors are far more damaging than positional ones.

\textbf{Misspecified Regime with Nonlinear Data}.
To demonstrate TriOpt's practicality on misspecified regime with nonlinear data, we ran additional experiments and generated nonlinear data following \citep{sanchez2023diffusion}'s protocol, considering ER1, ER4, SF1 and SF4 graphs. For each configuration, datasets of $ nodes \in \{25,50,100\} $ with $n=2000$ samples are generated. Figure \ref{fig-misspecified} in Appendix \ref{supplementary-results} show the results. In gaussian noise, TriOpt achieves lower Normalized SHD than DAGMA-Linear, GOLEM, and NOTEARS-Linear by approximately
\(3\%\), \(4\%\), and \(5\%\), respectively. It also improves F1 score over DAGMA-Linear, GOLEM, and NOTEARS-Linear by approximately
\(30\%\), \(10\%\), and \(76\%\), respectively.
The results demonstrate that TriOpt remains robust even when the assumption of linearity does not hold.  

\begin{table*}[t]
\centering
\scriptsize
\setlength{\tabcolsep}{3pt}

\begin{tabular}{lcccccc}
\toprule
& \multicolumn{2}{c}{\textbf{ARTH150}}
& \multicolumn{2}{c}{\textbf{Syntren}}
& \multicolumn{2}{c}{\textbf{Sachs}} \\
\cmidrule(lr){2-3} \cmidrule(lr){4-5} \cmidrule(lr){6-7}
\textbf{Model}
& \textbf{SHD} $\downarrow$
& \textbf{Runtime (s)} $\downarrow$
& \textbf{SHD} $\downarrow$
& \textbf{Runtime (s)} $\downarrow$
& \textbf{SHD} $\downarrow$
& \textbf{Runtime (s)} $\downarrow$ \\
\midrule

DAGMA
& $138.0 \pm 0.0$
& $41.20 \pm 0.40$
& $33.1 \pm 7.32$
& $33.12 \pm 3.46$
& $15.0 \pm 0.0$
& $22.109 \pm 0.031$ \\

GOLEM
& $163.0 \pm 0.0$
& $42.37 \pm 2.52$
& $43.9 \pm 11.23$
& $41.44 \pm 0.33$
& $20.4 \pm 1.3$
& $27.573 \pm 0.154$ \\

NOTEARS
& $140.4 \pm 0.89$
& $14.24 \pm 0.80$
& $\mathbf{33.0 \pm 5.66}$
& $51.82 \pm 1.21$
& $\mathbf{13.0 \pm 0.0}$
& $12.275 \pm 0.040$ \\

TriOpt w/o SM
& $\mathbf{135.0 \pm 0.0}$
& $6.95 \pm 0.05$
& $34.0 \pm 7.36$
& $1.55 \pm 1.64$
& $13.6 \pm 0.5$
& $0.717 \pm 0.475$ \\

TriOpt with SM
& $\mathbf{135.0 \pm 0.0}$
& $\mathbf{3.02 \pm 0.04}$
& $34.3 \pm 7.13$
& $\mathbf{1.02 \pm 0.10}$
& $14.0 \pm 0.0$
& $\mathbf{0.499 \pm 0.008}$ \\

\bottomrule
\end{tabular}
\caption{Performance comparison on ARTH150, Syntren, and Sachs based on SHD and runtime}
\label{tab:merged_shd_runtime}
\end{table*}

\textbf{Ablation study: Advantage of Linear Kernel and Sherman Morrison Downdate}.
To highlight TriOpt’s advantage in accelerating the topological ordering step, we performed an ablation study comparing three variants: (1) CaPS ordering, (2) TriOpt ordering with the Sherman–Morrison downdate, and (3) TriOpt ordering without the Sherman–Morrison downdate. The experiments were performed with nodes $ \in \{100, 200, 500,1000\}$ and $\text{number of samples}$ $\in \{2000,5000\} $.

Figure \ref{ablation-sherman-runtime-2000} in Appendix shows the comparison of ordering time, memory usage. It is evident that CaPS ordering incurs extremely high memory usage and thus becomes unusable when the number of dimension and number of samples is high. Figure \ref{ablation-sherman-odivergence-2000} in Appendix shows the comparison of order divergence over ER1, ER2, ER4 and SF4 graphs separately. On dense graphs (ER4 and SF4), CaPS ordering gets worse, wheras TriOpt keeps up perfect performance. In terms of runtime on 200 nodes, TriOpt, with and without the Sherman–Morrison downdate, is on average 114× and 22× faster, respectively, than CaPS ordering. Sherman morrison downdate was increasingly successful in reducing runtime as the dimension grew from 100 to 1000, without significant compromise in order divergence and peak memory usage. 
For 5000 samples, CaPS failed to run because of an out-of-memory error, while both variants of TriOpt executed successfully. The ordering time and order divergence results for 5000 samples are provided in Figure \ref{ablation-sherman-runtime-5000} and \ref{ablation-sherman-odivergence-5000} respectively Appendix \ref{ablation-sherman-5000}. 

\textbf{Ablation Study: Advantage of Triangular Optimization}.
To demonstrate the advantage of Upper triangular optimization over CAM pruning, we designed an ablation study with 2000 samples and nodes $\in \{20,50\} $. We reported the runtime, SHD and F1 score for (1) CAM Pruning (2) Triangular Optimization - on the ground truth ordering. Table \ref{ablation-ut} in Appendix \ref{appendix-advantage-ut} shows the comparison in terms of SHD and F1. Table \ref{ablation-ut-runtime} in Appendix \ref{appendix-advantage-ut} shows the comparison in terms of optimization/pruning time. On average, TriOpt’s optimization with an upper triangular constraint is 26 times faster than CAM pruning on 20-node problems and 664 times faster on 50-node problems, while also achieving better performance.

\textbf{Ablation Study: CaPS with Linear Kernel}. 
We ran experiments with CaPS, replacing its RBF kernel with a linear kernel but keeping its pruning step. Due to the scalability bottleneck of CaPS, we are considering dimensions $d \in \{50\}$. The results in table \ref{tab:linearCaPS} in Appendix show that TriOpt is still orders of magnitude faster than the linearized CaPS, and comparable or better in terms of accuracy.

\textbf{Results on Semi-synthetic and Real World Data}. From Table \ref{tab:merged_shd_runtime} it is evident that TriOpt achieves the best SHD on ARTH150 and remains highly competitive on Syntren and Sachs (within margin of the best), while delivering order of magnitude faster runtime. Similar trends can be seen in terms of F1 score and AID - which are reported in Table \ref{tab:merged_aid_f1} in appendix. The key takeaway is that when linearity assumptions are approximately satisfied in the real-world, TriOpt offers a scalable solution while maintaining competitive accuracy.

 \section{Conclusion and Future Work} 
We have introduced TriOpt, a novel framework which accelerates the runtime and improves the accuracy of linear DAG learning by threefold optimizations (a) avoiding kernel recomputation via exploiting the additivity property of linear kernel (b) avoiding kernel inverse recomputation by Sherman Morrison downdate (c) optimizing only the strictly upper triangular matrix without the acyclicity constraint, rendering the optimization problem convex.
We presented two versions of TriOpt with and without Sherman Morrison downdate - where the former offers reduction in ordering time especially on high dimensions. 
Results on synthetic, semi-synthetic and real world data indicate that TriOpt significantly improves runtime over existing baselines, with accuracy that is comparable or superior. One limitation of this framework is that the Sherman–Morrison downdate can occasionally become numerically unstable under Exponential and Gumbel noise, particularly when the denominator gets close to zero. Although we address this using periodic re-inversion and threshold-based re-inversion, future work could investigate more advanced strategies for stabilizing the kernel inverse updates. Future work can also explore how to extend TriOpt framework when the ground truth causal ordering is partially known. 

\bibliography{reffile.bib}
\bibliographystyle{plainnat}

\appendix
\onecolumn

\section{Proof of theorem 1}
\label{theorem1}

\begin{proof}
\textbf{1. Well-posedness of the covariance.}

As $B^*$ is strictly lower triangular, it is nilpotent, and the matrix $(I - B^*)$ is invertible with determinant 1. The SEM can be written as:
\[
X = (I - B^*)^{-1} Z = A Z,
\]
where $A := (I - B^*)^{-1}$. As the inverse of a lower triangular matrix is lower triangular with unit diagonal, $A$ is full rank. The covariance of $X$ can be written as:
\[
\Sigma_X = \mathbb{E}[XX^T] = A \mathbb{E}[ZZ^T] A^T = A \Omega A^T.
\]
Since $\Omega$ is positive definite (by the third assumption: $\omega_j^2 > 0$) and $A$ is non-singular, $\Sigma_X$ is positive definite. Therefore, any principal submatrix $\Sigma_{\mathrm{pre}(j)} := \mathbb{E}[X_{\mathrm{pre}(j)} X_{\mathrm{pre}(j)}^T]$ is also positive definite and invertible.

\medskip

\medskip
\noindent\textbf{2. Separable Least Squares.}

As the $j$-th column of $W$ depends only on $X_{\mathrm{pre}(j)}$, the problem decomposes into $d$ independent sub-problems:
\[
\min_{w_j} J_j(w_j) = \frac{1}{2} \mathbb{E} \left[ (X_j - w_j^T X_{\mathrm{pre}(j)})^2 \right].
\]
Expanding \( J_j(w_j) \), we obtain:
\[
\begin{aligned}
J_j(w_j)
&= \frac{1}{2}\,\mathbb{E}\!\left[
X_j^2
- 2 X_j\, w_j^{\top} X_{\mathrm{pre}(j)}
+ w_j^{\top} X_{\mathrm{pre}(j)} X_{\mathrm{pre}(j)}^{\top} w_j
\right] \\
&= \frac{1}{2}\left(
\mathbb{E}[X_j^2]
- 2 w_j^{\top} \mathbb{E}[X_{\mathrm{pre}(j)} X_j]
+ w_j^{\top} \Sigma_{\mathrm{pre}(j)} w_j
\right).
\end{aligned}
\]

This is a quadratic function of \( w_j \).
Its Hessian is given by
\[
\nabla^2 J_j(w_j) = \Sigma_{\mathrm{pre}(j)}.
\]

From Step~1, \( \Sigma_{\mathrm{pre}(j)} \succ 0 \), and therefore
\( J_j(w_j) \) is strictly convex and admits a unique global minimizer given by the OLS solution:
\[
\hat{w}_j = \Sigma_{\mathrm{pre}(j)}^{-1} \mathbb{E}[X_{\mathrm{pre}(j)} X_j].
\]

\medskip
\noindent\textbf{3. Recovery of the DAG}

From the structural equation for coordinate $j$, we have:
\[
X_j = \sum_{k=1}^{j-1} B^*_{jk} X_k + Z_j = (w^*_j)^T X_{\mathrm{pre}(j)} + Z_j,
\]
where $w^*_j = (B^*_{j1}, \dots, B^*_{j, j-1})^T$ corresponds to the non-zero entries of the $j$-th row of $B^*$. Substituting this into the OLS estimator:

\begin{align*}
 \hat{w}_j &= \Sigma_{\mathrm{pre}(j)}^{-1} \mathbb{E}\left[ X_{\mathrm{pre}(j)} \left( (w^*_j)^T X_{\mathrm{pre}(j)} + Z_j \right) \right] \\
 &= \Sigma_{\mathrm{pre}(j)}^{-1} \left( \mathbb{E}[X_{\mathrm{pre}(j)} X_{\mathrm{pre}(j)}^T] w^*_j + \mathbb{E}[X_{\mathrm{pre}(j)} Z_j] \right) \\
 &= \Sigma_{\mathrm{pre}(j)}^{-1} \Sigma_{\mathrm{pre}(j)} w^*_j + \Sigma_{\mathrm{pre}(j)}^{-1} \mathbb{E}[X_{\mathrm{pre}(j)} Z_j] \\
 &= w^*_j + \Sigma_{\mathrm{pre}(j)}^{-1} \mathbb{E}[X_{\mathrm{pre}(j)} Z_j]
\end{align*}

If we can prove that $\mathbb{E}[X_{\mathrm{pre}(j)} Z_j] = 0$, then the true adjacency matrix will be recovered exactly. 

\noindent\textbf{4. Deriving Orthogonality}

Using the reduced form $X = AZ$, the $i$-th variable is:
\[
X_i = \sum_{k=1}^{d} A_{ik} Z_k.
\]

Since $A$ is lower triangular, $A_{ik} = 0$ for $k > i$. Thus $X_i$ depends only on noise terms $Z_1, \dots, Z_i$:
\[
X_i = \sum_{k=1}^{i} A_{ik} Z_k.
\]

For the covariance with $Z_j$ where $j > i$:
\[
\mathbb{E}[X_i Z_j] = \mathbb{E} \Bigg[ \Bigg( \sum_{k=1}^{i} A_{ik} Z_k \Bigg) Z_j \Bigg] 
= \sum_{k=1}^{i} A_{ik} \mathbb{E}[Z_k Z_j].
\]

By the structural independence assumption, $\mathbb{E}[Z_k Z_j] = 0$ whenever $k \neq j$. Since the summation index $k$ runs from $1$ to $i$, and $i < j$, we have $k \neq j$ for all terms in the sum. Therefore:
\[
\mathbb{E}[X_i Z_j] = 0 \quad \text{for all } i < j.
\]

Therefore, 
\[
\mathbb{E}[X_{\mathrm{pre}(j)} Z_j] = 0.
\]

Using that in Step 3, 

\begin{align*}
 \hat{w}_j &=  w^*_j + \Sigma_{\mathrm{pre}(j)}^{-1} 0 \\ 
 &= w^*_j
\end{align*}

Thus, the minimizer for each subproblem recovers the true parameters of the structural equation. Therefore, the global minimizer is 
\[
W^* = (B^*)^\top.
\]

\end{proof}

Note that, we formulate Theorem 4.1 without the $\ell_1$ penalty. This is because, in the population limit, regularization is unnecessary for exact recovery. However, this changes in empirical setting. According to \citep{ng2020role} because of random estimation errors inherent to finite datasets, linear coefficients that are truly zero in the underlying structural equation model will often yield small, non-zero estimated values. The sparsity penalty forces these small, noisy edge weights to zero. Thus, the inclusion of the $\ell_1$ penalty in the practical algorithm is a necessary bridge from the population to finite-sample empirical data. This is consistent with standard practices in continuous optimization frameworks such as NOTEARS \citep{zheng2018dags} , GOLEM \citep{ng2020role}, DAGMA \citep{bello2022dagma}.

\textbf{Practical Implications of Theorem 4.1:} Although Theorem 4.1 assumes a known ground-truth causal ordering, it remains practically relevant because such orderings are naturally available in domains like biological pathways and industrial pipelines, while in cases where they are unknown, Algorithm \ref{alg:stein_ordering} can be used to estimate the causal order.

\section{Sensitivity to Ordering Mistakes}
\label{sec-sensitivity-ordering}
\textbf{Empirical Results:} 
We designed 5 types of ordering perturbations — random shuffle (unstructured noise), block reversal (contiguous segment flip), adjacent swap (minimal realistic error), ancestor swap (real edge inversion), and hub swap (high-degree targeting) and report on F1 score.
We ran the experiments on ER1, ER4 and SF4 graphs with 100 nodes and 5000 samples.  Table \ref{tab:er_degree1_results}, \ref{tab:er_degree4_results}, \ref{tab:sf_degree4_results} shows the experiments results respectively.

\begin{table*}[H]
\centering
\caption{Results for dataset ER with degree 1. Baseline (ground truth ordering) SHD: 0.000 | F1: 1.000.}
\label{tab:er_degree1_results}
\begin{tabular}{lccccc}
\toprule
\textbf{Method} & \textbf{\%} & \textbf{SHD} & \textbf{$\Delta$ SHD} & \textbf{F1} & \textbf{$\Delta$ F1} \\ \midrule
\multirow{4}{*}{random shuffle} 
& 1\% & 0.000 $\pm$ 0.000 & (+0.000) & 1.000 $\pm$ 0.000 & (+0.000) \\
& 5\% & 2.800 $\pm$ 1.720 & (+2.800) & 0.963 $\pm$ 0.019 & (-0.037) \\
& 10\% & 6.200 $\pm$ 2.786 & (+6.200) & 0.925 $\pm$ 0.032 & (-0.075) \\
& 20\% & 17.400 $\pm$ 8.868 & (+17.400) & 0.815 $\pm$ 0.073 & (-0.185) \\
\midrule
\multirow{4}{*}{block reversal} 
& 1\% & 0.000 $\pm$ 0.000 & (+0.000) & 1.000 $\pm$ 0.000 & (+0.000) \\
& 5\% & 0.000 $\pm$ 0.000 & (+0.000) & 1.000 $\pm$ 0.000 & (+0.000) \\
& 10\% & 0.000 $\pm$ 0.000 & (+0.000) & 1.000 $\pm$ 0.000 & (+0.000) \\
& 20\% & 0.000 $\pm$ 0.000 & (+0.000) & 1.000 $\pm$ 0.000 & (+0.000) \\
\midrule
\multirow{4}{*}{adjacent swap} 
& 1\% & 0.000 $\pm$ 0.000 & (+0.000) & 1.000 $\pm$ 0.000 & (+0.000) \\
& 5\% & 0.000 $\pm$ 0.000 & (+0.000) & 1.000 $\pm$ 0.000 & (+0.000) \\
& 10\% & 0.000 $\pm$ 0.000 & (+0.000) & 1.000 $\pm$ 0.000 & (+0.000) \\
& 20\% & 0.000 $\pm$ 0.000 & (+0.000) & 1.000 $\pm$ 0.000 & (+0.000) \\
\midrule
\multirow{4}{*}{ancestor swap} 
& 1\% & 1.400 $\pm$ 0.800 & (+1.400) & 0.979 $\pm$ 0.007 & (-0.021) \\
& 5\% & 11.800 $\pm$ 3.970 & (+11.800) & 0.858 $\pm$ 0.033 & (-0.142) \\
& 10\% & 32.200 $\pm$ 8.328 & (+32.200) & 0.671 $\pm$ 0.065 & (-0.329) \\
& 20\% & 49.600 $\pm$ 6.888 & (+49.600) & 0.505 $\pm$ 0.057 & (-0.495) \\
\midrule
\multirow{4}{*}{hub swap} 
& 1\% & 9.600 $\pm$ 5.161 & (+9.600) & 0.907 $\pm$ 0.049 & (-0.093) \\
& 5\% & 18.000 $\pm$ 13.799 & (+18.000) & 0.837 $\pm$ 0.113 & (-0.163) \\
& 10\% & 35.400 $\pm$ 18.337 & (+35.400) & 0.680 $\pm$ 0.140 & (-0.320) \\
& 20\% & 39.600 $\pm$ 5.083 & (+39.600) & 0.634 $\pm$ 0.045 & (-0.366) \\
\bottomrule
\end{tabular}
\end{table*}

\begin{table}[H]
\centering
\caption{Results for dataset ER with degree 4. Baseline (ground truth ordering) SHD: 0.000 | F1: 1.000.}
\label{tab:er_degree4_results}
\begin{tabular}{lccccc}
\toprule
\textbf{Method} & \textbf{\%} & \textbf{SHD} & \textbf{$\Delta$ SHD} & \textbf{F1} & \textbf{$\Delta$ F1} \\ \midrule
\multirow{4}{*}{random shuffle} 
& 1\% & 0.000 $\pm$ 0.000 & (+0.000) & 1.000 $\pm$ 0.000 & (+0.000) \\
& 5\% & 35.200 $\pm$ 41.344 & (+35.200) & 0.924 $\pm$ 0.081 & (-0.076) \\
& 10\% & 99.000 $\pm$ 32.330 & (+99.000) & 0.794 $\pm$ 0.055 & (-0.206) \\
& 20\% & 191.800 $\pm$ 51.375 & (+191.800) & 0.637 $\pm$ 0.077 & (-0.363) \\
\midrule
\multirow{4}{*}{block reversal} 
& 1\% & 0.000 $\pm$ 0.000 & (+0.000) & 1.000 $\pm$ 0.000 & (+0.000) \\
& 5\% & 1.600 $\pm$ 3.200 & (+1.600) & 0.996 $\pm$ 0.008 & (-0.004) \\
& 10\% & 18.800 $\pm$ 16.570 & (+18.800) & 0.956 $\pm$ 0.039 & (-0.044) \\
& 20\% & 77.200 $\pm$ 58.438 & (+77.200) & 0.835 $\pm$ 0.111 & (-0.165) \\
\midrule
\multirow{4}{*}{adjacent swap} 
& 1\% & 0.000 $\pm$ 0.000 & (+0.000) & 1.000 $\pm$ 0.000 & (+0.000) \\
& 5\% & 0.000 $\pm$ 0.000 & (+0.000) & 1.000 $\pm$ 0.000 & (+0.000) \\
& 10\% & 0.000 $\pm$ 0.000 & (+0.000) & 1.000 $\pm$ 0.000 & (+0.000) \\
& 20\% & 0.000 $\pm$ 0.000 & (+0.000) & 1.000 $\pm$ 0.000 & (+0.000) \\
\midrule
\multirow{4}{*}{ancestor swap} 
& 1\% & 18.400 $\pm$ 12.659 & (+18.400) & 0.956 $\pm$ 0.027 & (-0.044) \\
& 5\% & 111.400 $\pm$ 38.422 & (+111.400) & 0.770 $\pm$ 0.060 & (-0.230) \\
& 10\% & 202.400 $\pm$ 37.382 & (+202.400) & 0.618 $\pm$ 0.052 & (-0.382) \\
& 20\% & 303.600 $\pm$ 42.580 & (+303.600) & 0.465 $\pm$ 0.044 & (-0.535) \\
\midrule
\multirow{4}{*}{hub swap} 
& 1\% & 33.200 $\pm$ 26.551 & (+33.200) & 0.926 $\pm$ 0.057 & (-0.074) \\
& 5\% & 155.800 $\pm$ 57.638 & (+155.800) & 0.699 $\pm$ 0.102 & (-0.301) \\
& 10\% & 324.800 $\pm$ 43.902 & (+324.800) & 0.458 $\pm$ 0.056 & (-0.542) \\
& 20\% & 262.000 $\pm$ 25.938 & (+262.000) & 0.533 $\pm$ 0.028 & (-0.467) \\
\bottomrule
\end{tabular}
\end{table}

\begin{table}[ht]
\centering
\caption{Results for dataset SF with degree 4. Baseline (ground truth ordering) SHD: 0.000 | F1: 1.000.}
\label{tab:sf_degree4_results}
\begin{tabular}{lccccc}
\toprule
\textbf{Method} & \textbf{\%} & \textbf{SHD} & \textbf{$\Delta$ SHD} & \textbf{F1} & \textbf{$\Delta$ F1} \\ \midrule
\multirow{4}{*}{random shuffle} 
& 1\%  & 0.000 $\pm$ 0.000 & (+0.000) & 1.000 $\pm$ 0.000 & (+0.000) \\
& 5\%  & 29.400 $\pm$ 47.395 & (+29.400) & 0.932 $\pm$ 0.102 & (-0.068) \\
& 10\% & 58.000 $\pm$ 60.808 & (+58.000) & 0.870 $\pm$ 0.125 & (-0.130) \\
& 20\% & 182.600 $\pm$ 112.253 & (+182.600) & 0.667 $\pm$ 0.187 & (-0.333) \\ \midrule
\multirow{4}{*}{block reversal} 
& 1\%  & 0.000 $\pm$ 0.000 & (+0.000) & 1.000 $\pm$ 0.000 & (+0.000) \\
& 5\%  & 0.000 $\pm$ 0.000 & (+0.000) & 1.000 $\pm$ 0.000 & (+0.000) \\
& 10\% & 2.000 $\pm$ 4.000 & (+2.000) & 0.994 $\pm$ 0.012 & (-0.006) \\
& 20\% & 0.000 $\pm$ 0.000 & (+0.000) & 1.000 $\pm$ 0.000 & (+0.000) \\ \midrule
\multirow{4}{*}{adjacent swap} 
& 1\%  & 0.000 $\pm$ 0.000 & (+0.000) & 1.000 $\pm$ 0.000 & (+0.000) \\
& 5\%  & 0.000 $\pm$ 0.000 & (+0.000) & 1.000 $\pm$ 0.000 & (+0.000) \\
& 10\% & 8.400 $\pm$ 16.800 & (+8.400) & 0.980 $\pm$ 0.041 & (-0.020) \\
& 20\% & 16.600 $\pm$ 20.333 & (+16.600) & 0.960 $\pm$ 0.049 & (-0.040) \\ \midrule
\multirow{4}{*}{ancestor swap} 
& 1\%  & 101.400 $\pm$ 61.924 & (+101.400) & 0.791 $\pm$ 0.110 & (-0.209) \\
& 5\%  & 237.200 $\pm$ 91.951 & (+237.200) & 0.580 $\pm$ 0.123 & (-0.420) \\
& 10\% & 346.200 $\pm$ 34.102 & (+346.200) & 0.432 $\pm$ 0.055 & (-0.568) \\
& 20\% & 478.200 $\pm$ 32.799 & (+478.200) & 0.302 $\pm$ 0.032 & (-0.698) \\ \midrule
\multirow{4}{*}{hub swap} 
& 1\%  & 39.200 $\pm$ 61.950 & (+39.200) & 0.918 $\pm$ 0.122 & (-0.082) \\
& 5\%  & 236.800 $\pm$ 79.592 & (+236.800) & 0.589 $\pm$ 0.107 & (-0.411) \\
& 10\% & 402.600 $\pm$ 50.230 & (+402.600) & 0.403 $\pm$ 0.048 & (-0.597) \\
& 20\% & 306.600 $\pm$ 68.864 & (+306.600) & 0.506 $\pm$ 0.074 & (-0.494) \\
\bottomrule
\end{tabular}
\end{table}

TriOpt is robust to mild structural errors (adjacent swaps cause near-zero degradation even at 20\%, while random and block perturbations hurt performance beyond 10\%). However, perturbations targeting ancestor and hub swaps cause rapid degradation even at 1\%. Perturbation sensitivity is higher in denser graphs (ER4,SF4). SF4 degrades the most under ancestor and hub swaps, owing to its skewed degree distribution.

\textbf{On sparsity penalty:} The sparsity penalty can correct spurious edges (false positives). If the ordering suggests that $X_i$ could be a parent of $X_j$, but the data does not support a functional relationship, the sparsity penalty will drive the weight $W_{ij}$ to zero.  However, it cannot correct false negatives. Any relationship that would require a non-zero entry in the lower triangle $(i \ge  j)$ is impossible to recover because those are fixed at zero by the hard constraint. 

\textbf{On error propagation.}
Suppose \( i \in \mathrm{Pa}_G(j) \), meaning \( B^\ast_{ij} \neq 0 \), but the estimated ordering satisfies \( \hat{\pi}(j) < \hat{\pi}(i) \).
In this case, the mask enforces \( W_{ij} = 0 \) deterministically, so \( \hat{B}_{ij} = 0 \), and the edge is structurally removed.

Mathematically, the OLS objective for the \( j \)-th subproblem is
\[
J_j(w_j)
= \frac{1}{2} \, \mathbb{E}\!\left[ \left( X_j - w_j^\top X_{\hat{\mathrm{pre}}(j)} \right)^2 \right],
\]
where \( \hat{\mathrm{pre}}(j) \) denotes the set of nodes preceding \( j \) in the estimated ordering \( \hat{\pi} \).

If \( i \notin \hat{\mathrm{pre}}(j) \) but \( i \to j \) is a true edge, then \( X_i \) is excluded from the regression. Since
\[
X_j
= B^\ast_{ij} X_i
+ \sum_{k \in \mathrm{Pa}_G(j) \setminus \{i\}} B^\ast_{kj} X_k
+ Z_j,
\]
the contribution \( B^\ast_{ij} X_i \) is unmodeled and becomes part of an effective noise term:
\[
\tilde{Z}_j = B^\ast_{ij} X_i + Z_j.
\]

This breaks the orthogonality condition required by Theorem 4.1:
\[
\mathbb{E}\!\left[ X_{\hat{\mathrm{pre}}(j)} \, \tilde{Z}_j \right] = 0,
\]
because \( X_i \) is generally a linear combination of ancestors of \( i \), some of which may lie in \( \hat{\mathrm{pre}}(j) \).

If any such ancestor \( k \in \hat{\mathrm{pre}}(j) \), then
\[
\mathbb{E}[X_k \tilde{Z}_j]
= B^\ast_{ij} \, \mathbb{E}[X_k X_i] \neq 0,
\]
in general. This induces omitted-variable bias in all coefficients involving shared ancestors.


\section{Computational Complexity}
\label{computational-complexity}
\paragraph{Ordering Phase} Table ~\ref{timecomplexity} compares the time complexity of TriOpt operations with corresponding CaPS~\citep{xu2024ordering} operations. Here, n = number of samples, d = number of features, k = number of times to reinvert for numerical stability (which is much smaller than d). 
\begin{table}[h!]
\centering
\begin{tabular}{c|c|c}
\hline
\textbf{Operation} & \textbf{CaPS} 
                   & \textbf{TriOpt} \\ \hline\hline

Kernel Choice 
& RBF
& Linear 
\\ \hline

Kernel construction 
& $O(n^{2} d^{2})$
& $O(n^{2} kd)$ 
\\ \hline

Gradient term $\nabla K$ 
& $O(n^{2} d^{2})$
& $O(n d^{2})$
\\ \hline

Kernel inverse $K^{-1}$ 
& $O(d\, n^{3})$
& $O(kn^{3} + d\, n^{2})$
\\ \hline

\end{tabular}
\caption{Time complexity comparison of RBF vs Linear Stein kernel computations 
over all $d$ feature-removal iterations.}
\label{timecomplexity}
\end{table}

\paragraph{Optimization Phase}   
Computing the Gram matrix $G=\frac{1}{n}X^\top X$ requires $O(nd^2)$ time and is performed once.
Each optimization step is dominated by evaluating and differentiating
$\mathrm{Tr}(W^\top G W)$, which costs $O(d^3)$, while the linear trace term,
smooth $\ell_1$ regularization, and masking cost $O(d^2)$.
For $T$ iterations, the total complexity is $O(nd^2) + O(Td^3)$. Although TriOpt Optimization belongs to the same $O(d^3)$ complexity  class as NOTEARS and other continuous optimization methods, TriOpt's $O(d^3)$ complexity comes from pure matrix multiplications, whereas for other continuous optimization methods, it comes from computing matrix exponentials. \citet{higham2005scaling} demonstrates that calculating the matrix exponential is algorithmically equivalent to performing several matrix multiplication passes. 
Furthermore, TriOpt Optimization objective is convex, while continuous optimization based methods have a non-convex optimization objective. Consequently, TriOpt not only guarantees global optimality but also achieves convergence much faster. These theoretical advantages are empirically validated by the results in Section \ref{main-results}.

\section{Experiment Details}
\label{experiment-details}
\subsection{Baseline Implementation Details}
The implementation details of the baselines are listed below: 
\begin{enumerate}
    \item NOTEARS \citep{zheng2018dags}: The original author code (\hyperlink{https://github.com/xunzheng/notears}{https://github.com/xunzheng/notears}) does not have a GPU implementation. In order to make a fair comparison in terms of runtime, we developed a GPU implementation of NOTEARS in Pytorch. The choice of hyperparameter $\lambda_1$ was set to = 0.1 following \citep{ng2020role}. As threshold we went with the default $w=0.3$. 

    \item GOLEM \citep{ng2020role}: We employed the GOLEM implementation from the Gcastle \citep{zhang2021gcastle} Python package, using its default hyperparameter configuration.  

    \item DAGMA \citep{bello2022dagma}: The original author code (\hyperlink{https://github.com/kevinsbello/dagma}{https://github.com/kevinsbello/dagma}) does not have a GPU implementation for the linear version. In order to make a fair comparison in terms of runtime, we developed a GPU implementation of DAGMA-Linear in Pytorch. The choice of hyperparameters was exactly the same as in the original paper. 

    \item CaPS \citep{xu2024ordering}: We used original author code (\hyperlink{https://github.com/E2real/CaPS}{https://github.com/E2real/CaPS}). All CaPS experiments were run with the default pre-pruning and edge supplement. The choice of hyperparameters was exactly the same as in the original paper. 
    
\end{enumerate} 

In the experiments, we use default hyperparameters for these baselines unless otherwise stated. All experiments were conducted on a server with 503 GB RAM, dual AMD EPYC 7513 processors (64 cores total), and an NVIDIA A100-SXM4 GPU (80 GB) configured with Multi-Instance GPU (MIG).

\subsection{TriOpt Implementation Details}

\subsubsection{Ordering Phase}
Our implementation of the ordering is based on CaPS's open source implementation which can be found at (\hyperlink{https://github.com/E2real/CaPS}{https://github.com/E2real/CaPS}).
In the ordering phase, the choice of hyperparameters was as follows : 

 \begin{table}[H]
\centering
\begin{tabular}{|l|l|p{10cm}|}
\hline
\textbf{HyperParameter} & \textbf{ Value} & \textbf{Description} \\
\hline
$\eta_{G}$ & $0.9$ & Regularization parameter for the Kernel matrix.  \\
\hline

$\tau$ & 100 & Frequency of periodic Kernel matrix reinversion for numerical stability. 
\\
\hline
$\zeta$ & $10^{-7}$ & Threshold for matrix reinversion for numerical stability. 
\\
\hline
\end{tabular}
\caption{Hyperparameters for the Ordering phase}
\label{tab:hyperparameters}
\end{table}

k is defined as $\frac{d}{\tau}$.  

\subsubsection{Optimization Phase}
In the optimization phase, the choice of hyperparameters was as follows : 

\begin{table}

\centering
\begin{tabular}{|l |c |}
\hline
\textbf{Hyperparameter name} & \textbf{Value} \\
\hline
$\lambda_1$      & 0.001 \\
\texttt{max\_iter}      & 100 \\
\texttt{w\_threshold}   & 0.3 \\
$\epsilon$       & 1\text{e-}4 \\
\hline
\end{tabular}
\caption{Optimization Phase Hyperparameters}
\end{table}

\section{Supplementary Experiment Results}
\label{supplementary-results}

\subsection{Misspecified Regime with Nonlinear Data}
\begin{figure}[H]
    \centering
    \begin{subfigure}[b]{0.6\linewidth}
        \centering
        \includegraphics[width=\linewidth]{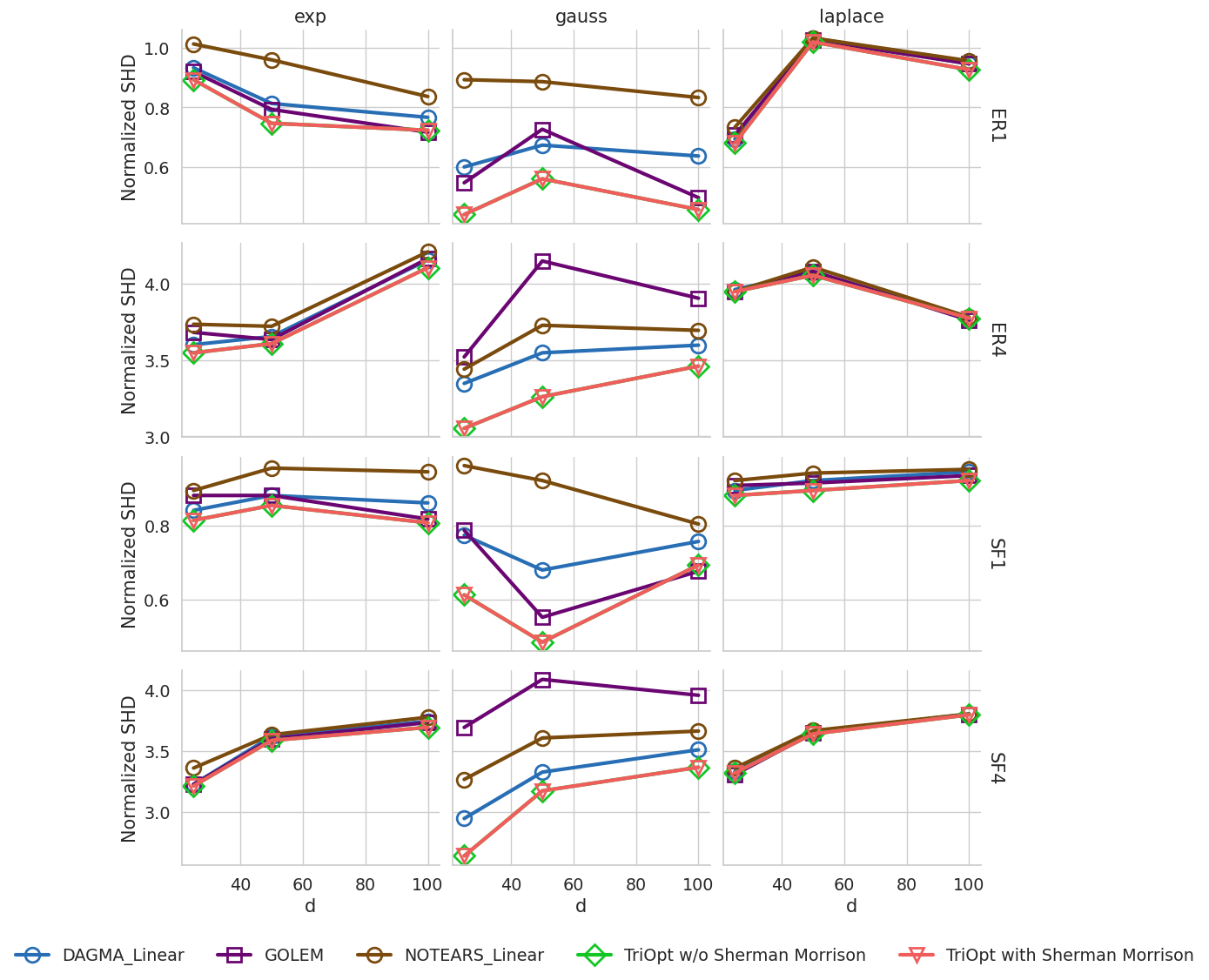}
        \caption{SHD}
        \label{fig:misspecified-shd}
    \end{subfigure}\hfill
    \begin{subfigure}[b]{0.65\linewidth}
        \centering
        \includegraphics[width=\linewidth]{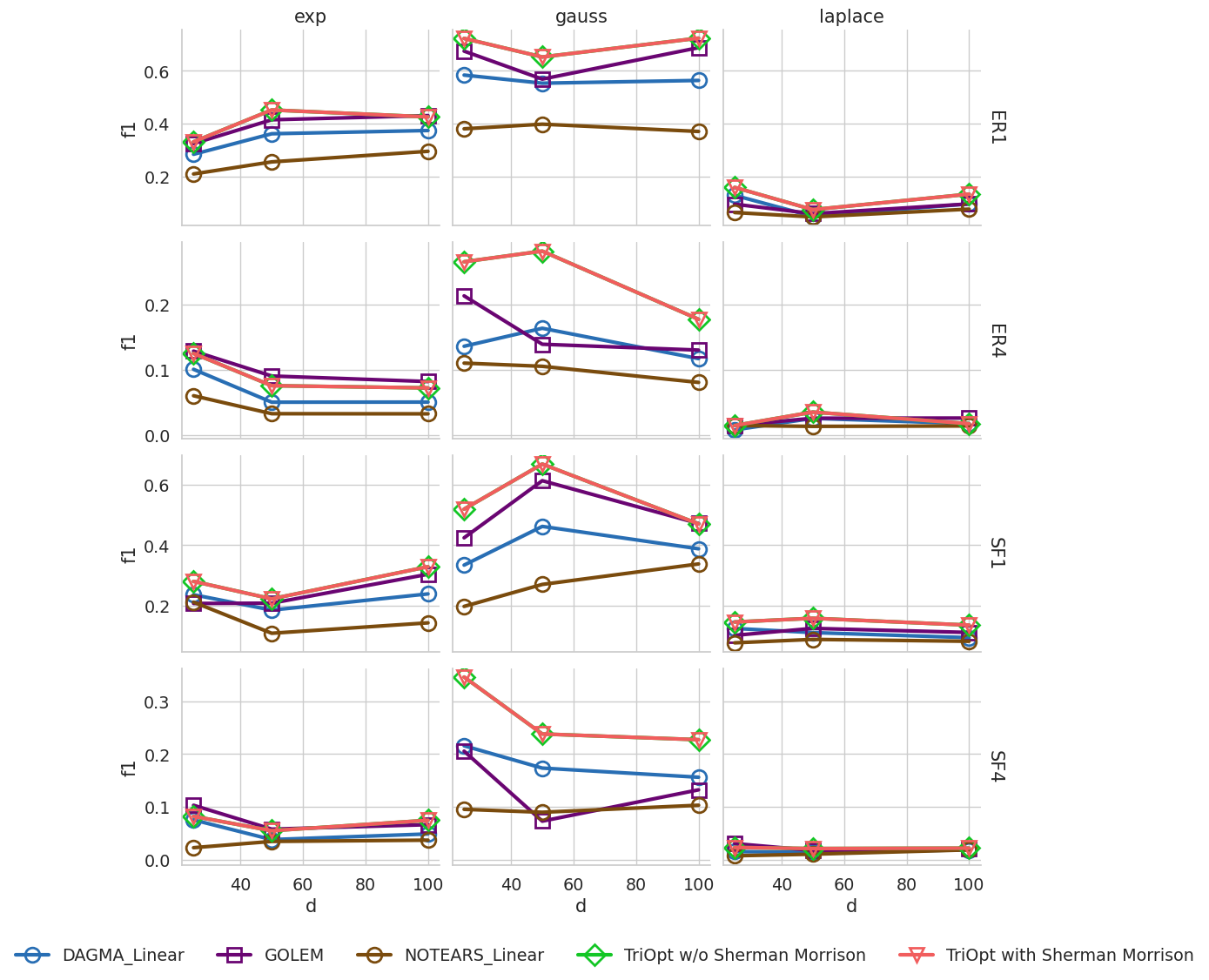}
        \caption{F1}
        \label{fig:misspecified-f1}
    \end{subfigure}
    \caption{Results on Nonlinear Data. TriOpt stays robust, delivering accuracy that is better (exponential, gaussian noise) than or on par (laplacian noise) with linear baselines, even when the assumption of linearity is violated.}
    \label{fig-misspecified}
\end{figure}

\subsection{Sensitivity Analysis of Weight Scale}

\begin{figure}[H]
    \centering
    \includegraphics[width=0.85\linewidth]{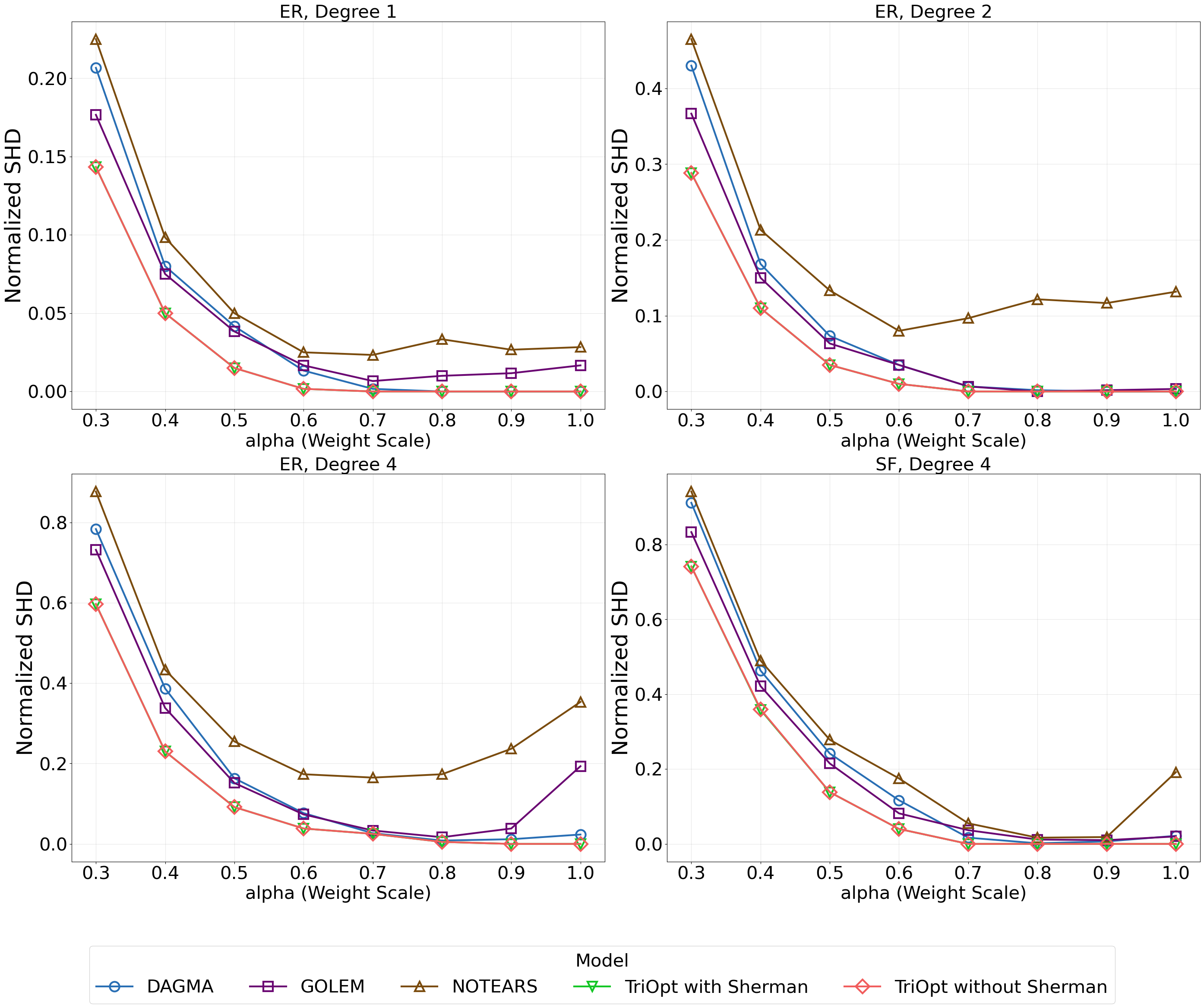}
    \caption{Normalized SHD (Lower Better) vs $\alpha$. TriOpt consistently demonstrates stronger robustness across the entire weight range than the baseline methods}
    \label{sensitivity-shd}
\end{figure}

\clearpage

\subsection{Robustness varying sparsity results}
\label{robustness-sparsity}

\begin{figure}[H]
    \centering
    \includegraphics[width=0.90\linewidth]{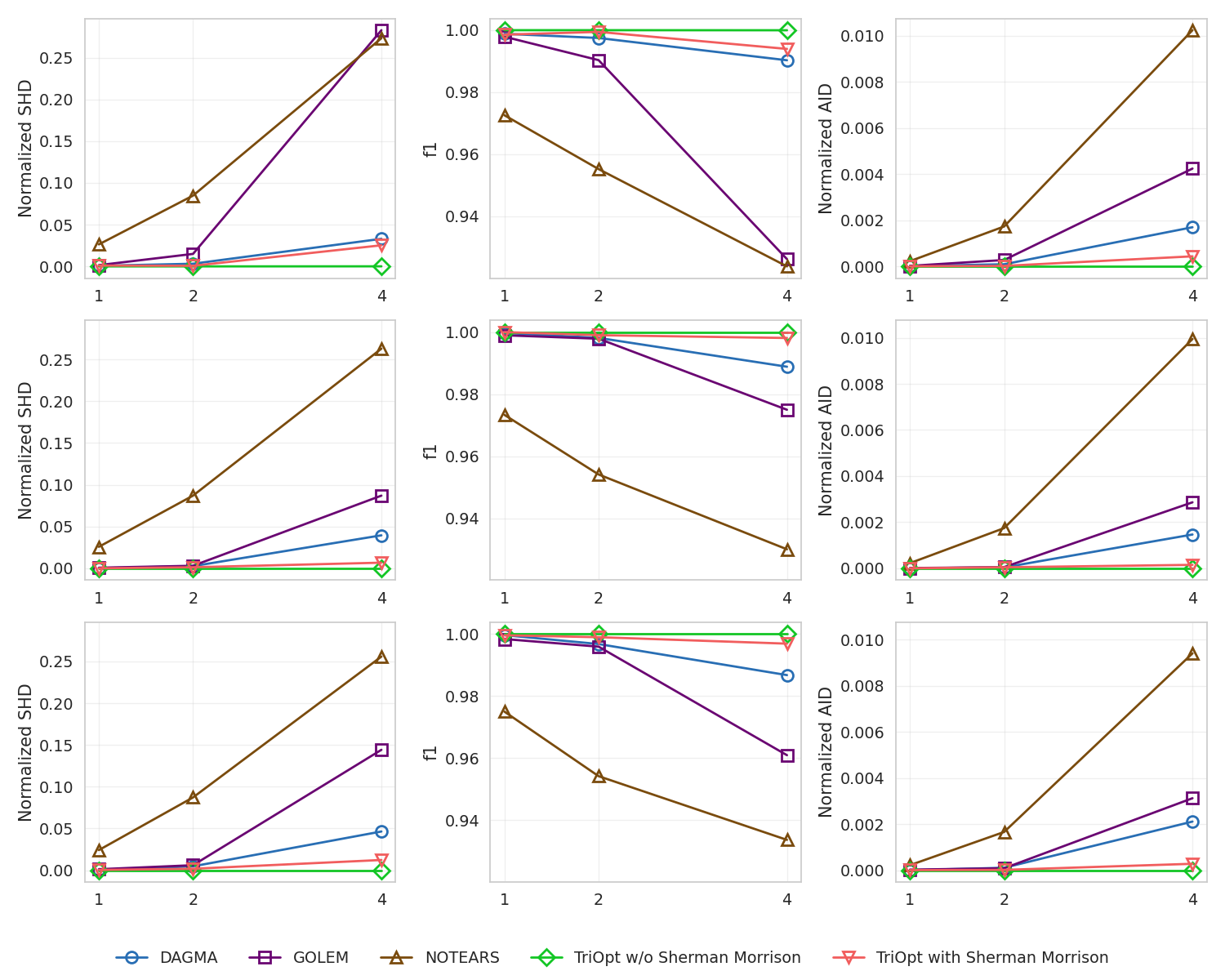}
    \caption{Robustness of the models with increase in degree. Both versions of TriOpt keep up the best performance as the graphs gets denser, while GOLEM, DAGMA and NOTEARS show significant decline. (SHD, AID Lower Better, F1 Higher Better)}
    \label{vary-sparsity}
\end{figure}

\clearpage

\subsection{Ablation study: Advantage of Upper Triangular Optimization}
\label{appendix-advantage-ut}

\begin{table}[H]
\centering

\resizebox{\textwidth}{!}{
\begin{tabular}{cccccccc}
\toprule
\multirow{2}{*}{Graph} & \multirow{2}{*}{Degree} & \multirow{2}{*}{Noise} & \multirow{2}{*}{$d$} &
\multicolumn{2}{c}{F1 Score} & \multicolumn{2}{c}{SHD} \\
\cmidrule(lr){5-6} \cmidrule(lr){7-8}
& & & & CAM Pruning & Upper Triangular & CAM Pruning & Upper Triangular \\
\midrule

\multirow{18}{*}{ER}
& \multirow{6}{*}{1}
& exponential & 20 & $0.96 \pm 0.07$ & $1.00 \pm 0.00$ & $0.03 \pm 0.04$ & $0.00 \pm 0.00$ \\
& &  & 50 & $0.869 \pm 0.04$ & $1.00 \pm 0.00$ & $0.148 \pm 0.04$ & $0.00 \pm 0.00$ \\
& & gaussian\_ev & 20 & $0.994 \pm 0.01$ & $1.00 \pm 0.00$ & $0.01 \pm 0.02$ & $0.00 \pm 0.00$ \\
& &  & 50 & $0.981 \pm 0.02$ & $1.00 \pm 0.00$ & $0.02 \pm 0.02$ & $0.00 \pm 0.00$ \\
& & gumbel & 20 & $0.977 \pm 0.02$ & $1.00 \pm 0.00$ & $0.03 \pm 0.03$ & $0.00 \pm 0.00$ \\
& &  & 50 & $0.911 \pm 0.04$ & $0.996 \pm 0.01$ & $0.096 \pm 0.04$ & $0.004 \pm 0.01$ \\

& \multirow{6}{*}{2}
& exponential & 20 & $0.987 \pm 0.02$ & $1.00 \pm 0.00$ & $0.03 \pm 0.04$ & $0.00 \pm 0.00$ \\
& &  & 50 & $0.943 \pm 0.02$ & $1.00 \pm 0.00$ & $0.108 \pm 0.03$ & $0.00 \pm 0.00$ \\
& & gaussian\_ev & 20 & $0.996 \pm 0.01$ & $0.996 \pm 0.01$ & $0.01 \pm 0.02$ & $0.01 \pm 0.02$ \\
& &  & 50 & $0.983 \pm 0.01$ & $1.00 \pm 0.00$ & $0.032 \pm 0.01$ & $0.00 \pm 0.00$ \\
& & gumbel & 20 & $0.985 \pm 0.02$ & $1.00 \pm 0.00$ & $0.04 \pm 0.07$ & $0.00 \pm 0.00$ \\
& &  & 50 & $0.948 \pm 0.02$ & $0.998 \pm 0.00$ & $0.10 \pm 0.03$ & $0.004 \pm 0.01$ \\

& \multirow{6}{*}{4}
& exponential & 20 & $0.991 \pm 0.01$ & $0.998 \pm 0.00$ & $0.04 \pm 0.02$ & $0.01 \pm 0.02$ \\
& &  & 50 & $0.977 \pm 0.02$ & $0.998 \pm 0.00$ & $0.096 \pm 0.07$ & $0.008 \pm 0.02$ \\
& & gaussian\_ev & 20 & $0.997 \pm 0.01$ & $1.00 \pm 0.00$ & $0.01 \pm 0.02$ & $0.00 \pm 0.00$ \\
& &  & 50 & $0.987 \pm 0.01$ & $0.999 \pm 0.00$ & $0.052 \pm 0.04$ & $0.004 \pm 0.01$ \\
& & gumbel & 20 & $0.995 \pm 0.01$ & $1.00 \pm 0.00$ & $0.02 \pm 0.04$ & $0.00 \pm 0.00$ \\
& &  & 50 & $0.986 \pm 0.01$ & $0.996 \pm 0.01$ & $0.056 \pm 0.03$ & $0.016 \pm 0.03$ \\

\midrule
\multirow{6}{*}{SF}
& \multirow{6}{*}{4}
& exponential & 20 & $0.987 \pm 0.01$ & $1.00 \pm 0.00$ & $0.05 \pm 0.05$ & $0.00 \pm 0.00$ \\
& &  & 50 & $0.962 \pm 0.01$ & $1.00 \pm 0.00$ & $0.152 \pm 0.06$ & $0.00 \pm 0.00$ \\
& & gaussian\_ev & 20 & $0.997 \pm 0.01$ & $1.00 \pm 0.00$ & $0.01 \pm 0.02$ & $0.00 \pm 0.00$ \\
& &  & 50 & $0.99 \pm 0.01$ & $1.00 \pm 0.00$ & $0.04 \pm 0.02$ & $0.00 \pm 0.00$ \\
& & gumbel & 20 & $1.00 \pm 0.00$ & $1.00 \pm 0.00$ & $0.00 \pm 0.00$ & $0.00 \pm 0.00$ \\
& &  & 50 & $0.983 \pm 0.02$ & $1.00 \pm 0.00$ & $0.068 \pm 0.06$ & $0.00 \pm 0.00$ \\

\bottomrule
\end{tabular}
}
\caption{Comparison of  SHD (Lower better) and F1 score (Higher Better) for CAM Pruning vs TriOpt's Optimization phase across different graph types, degrees, and noise types - under Ground Truth Ordering. TriOpt optimization outperforms CAM pruning in terms of both SHD and F1 score across all scenarios}
\label{ablation-ut}
\end{table}

\begin{table}[H]
\centering
\begin{tabular}{clc}
\toprule
Number of Nodes & Model & Time (seconds) \\
\midrule
\multirow{2}{*}{20}
 & CAM Pruning & $22.88 \pm 4.46$ \\
 & Upper Triangular Optimization & $\mathbf{0.88 \pm 0.26}$ \\
\midrule
\multirow{2}{*}{50}
 & CAM Pruning & $824.93 \pm 77.34$ \\
 & Upper Triangular Optimization & $\mathbf{1.24 \pm 0.61}$ \\
\bottomrule
\end{tabular}
\caption{CAM Pruning vs. TriOpt Optimization Time Comparison on Ground Truth Ordering for $\{20,50\}$ nodes. TriOpt is drastically faster in higher dimensions.}
\label{ablation-ut-runtime}
\end{table}

\subsection{Ablation study: Advantage of Sherman Morrison Downdate}
\label{ablation-sherman-5000}
This section shows the result of ablation study in section 5.4, based on three metrics - Order Divergence, Peak Memory Usage and Ordering time for 2000 and  5000 samples varying the nodes $\in \{100,200,500,1000\}$.
The results show Sherman Morrison downdate becomes increasingly useful in reducing the runtime of ordering phase as the number of nodes grow from 100 to 1000. However, there was a slight decline in order divergence performance and memory usage.

\begin{figure}[H]
    \centering
    \includegraphics[width=0.85\linewidth]{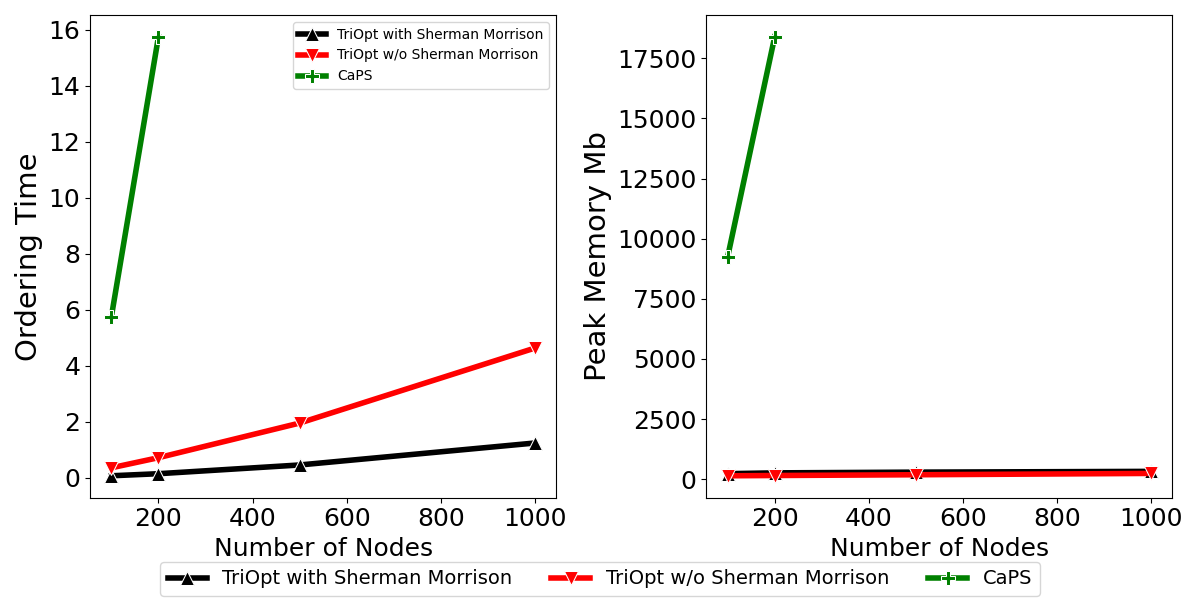}
    \caption{Comparison of Ordering time and Peak Memory usage by 2 variants of TriOpt and CaPS: TriOpt is drastically more scalable than CaPS ordering. CaPS was unable to run in dimension$>200$ due to memory bottleneck, while TriOpt completed the ordering task with order of magnitude lower memory cost}
    \label{ablation-sherman-runtime-2000}
\end{figure}

\begin{figure}[H]
    \centering
    \includegraphics[width=0.85\linewidth]{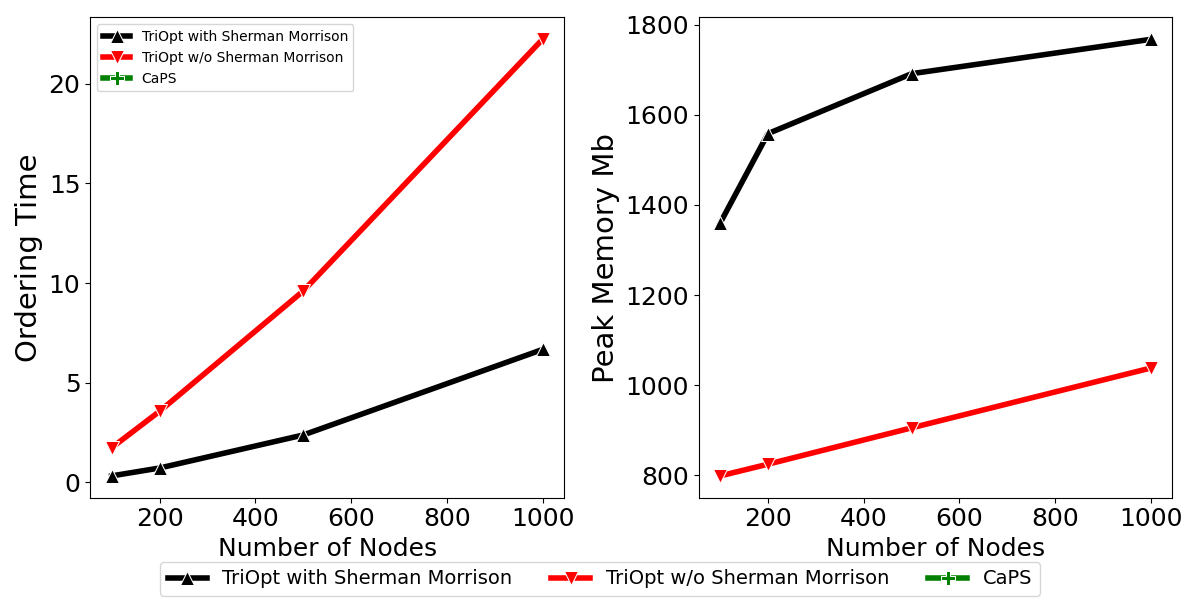}
    \caption{Comparison of Ordering time and Peak Memory usage by 2 variants of TriOpt on 5000 samples. Sherman Morrison downdate becomes more relevant in higher dimensions to reduce the runtime.}
    \label{ablation-sherman-runtime-5000}
\end{figure}

\begin{figure}[H]
    \centering
    \includegraphics[width=0.8\textwidth]{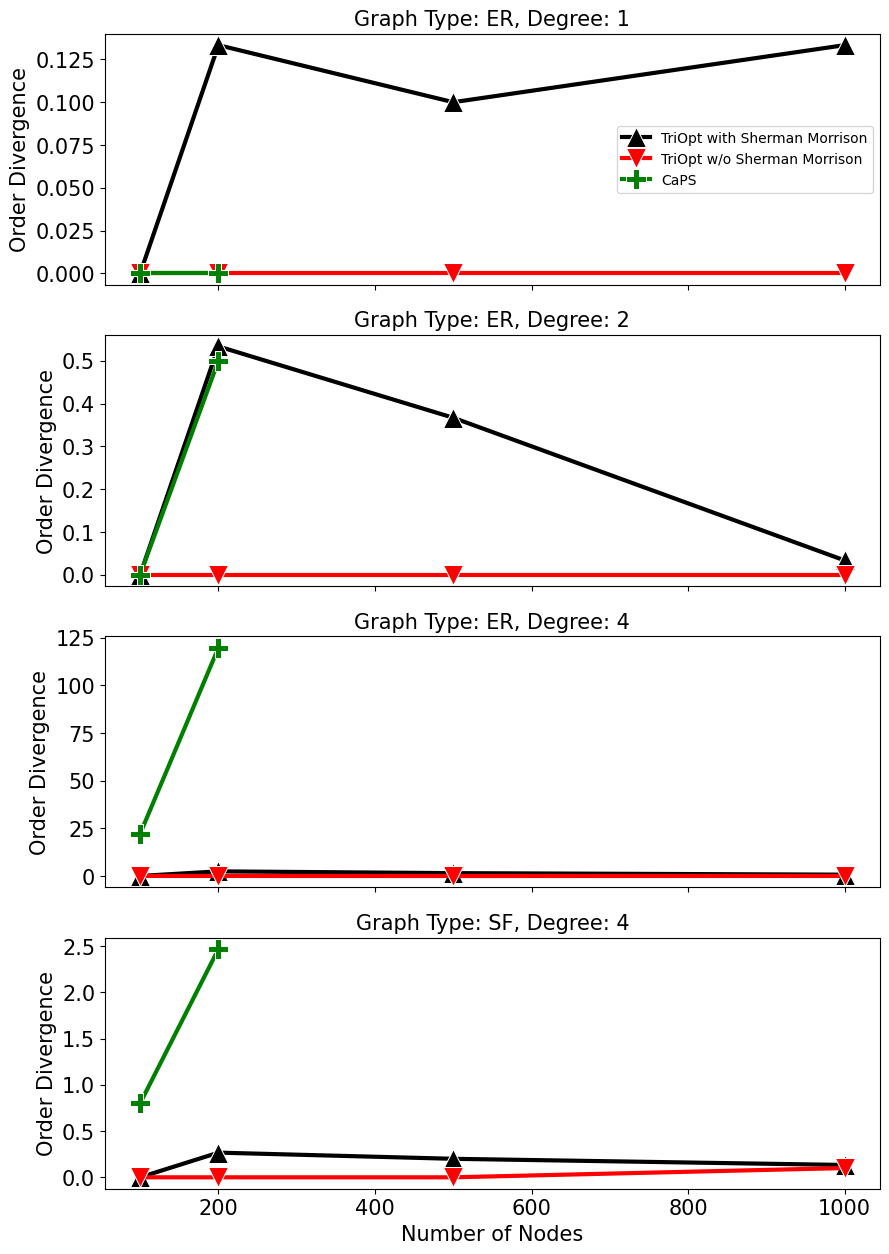}
    \caption{Comparison of Order Divergence by 2 variants of TriOpt and CaPS on 2000 samples. TriOpt without Sherman Morrison is the most accurate with order divergence always 0. CaPS ordering on the contrary gets much worse on dense graphs (ER4 and SF4)}
    \label{ablation-sherman-odivergence-2000}
\end{figure}

\begin{figure}[H]
    \centering
    \includegraphics[width=0.8\textwidth]{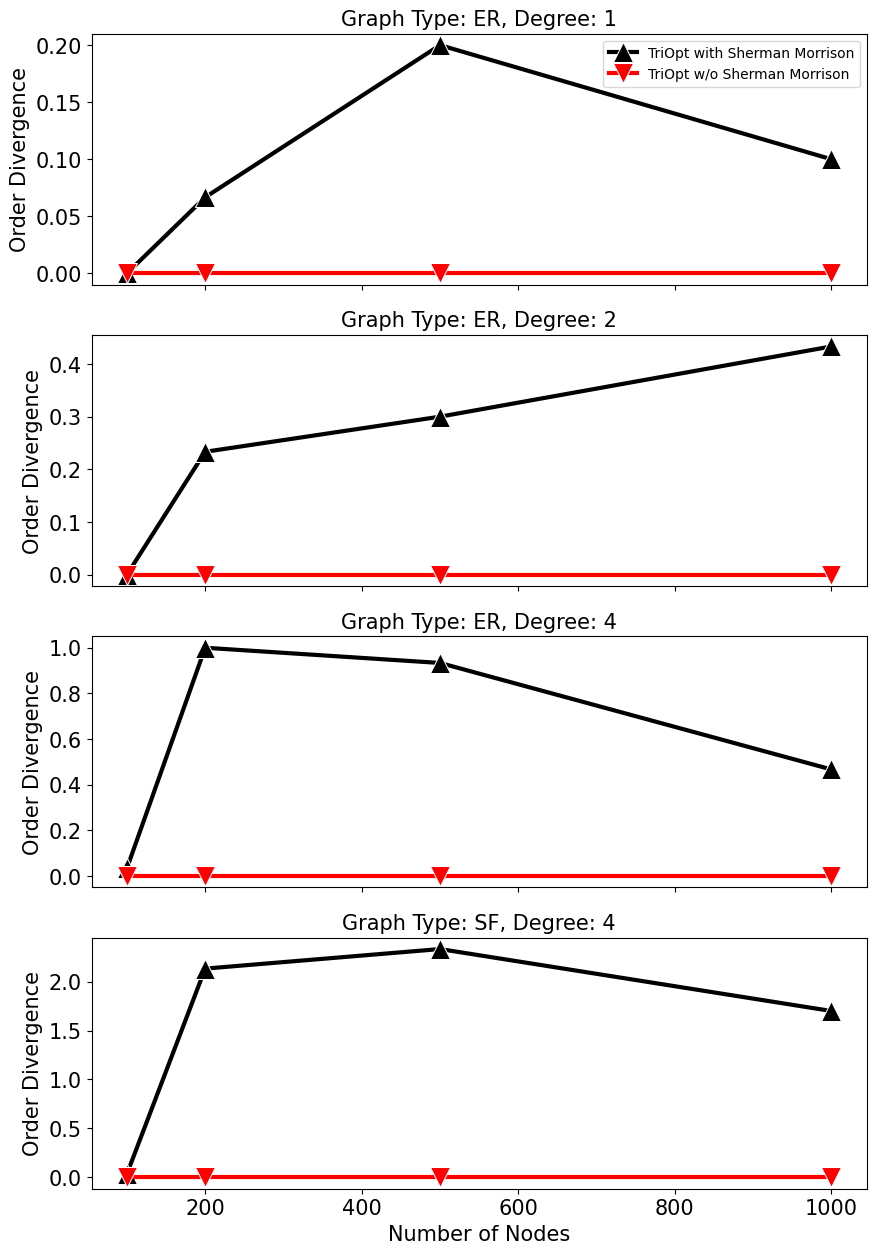}
    \caption{Comparison of Order Divergence by 2 variants of TriOpt on 5000 samples. There is negligible deterioration in performance with Sherman Morrison Downdate}
    \label{ablation-sherman-odivergence-5000}
    
\end{figure}

\subsection{Ablation Study: CaPS with Linear Kernel}
\begin{table}[H]
\centering
\scriptsize

\begin{tabularx}{\textwidth}{l *{12}{>{\centering\arraybackslash}X}}
\toprule
\textbf{graph\_degree} 
& \multicolumn{3}{c}{\textbf{ER1}} 
& \multicolumn{3}{c}{\textbf{ER2}} 
& \multicolumn{3}{c}{\textbf{ER4}} 
& \multicolumn{3}{c}{\textbf{SF4}} \\
\cmidrule(lr){2-4} \cmidrule(lr){5-7} \cmidrule(lr){8-10} \cmidrule(lr){11-13}

\textbf{model} 
& Norm.\ SHD & F1 & Runtime (s) 
& Norm.\ SHD & F1 & Runtime (s) 
& Norm.\ SHD & F1 & Runtime (s) 
& Norm.\ SHD & F1 & Runtime (s) \\
\midrule

TriOpt 
& 0.000 $\pm$ 0.000 & 1.000 $\pm$ 0.000 & 0.77 $\pm$ 0.32 
& 0.000 $\pm$ 0.000 & 1.000 $\pm$ 0.000 & 0.95 $\pm$ 0.14 
& 0.000 $\pm$ 0.000 & 1.000 $\pm$ 0.000 & 2.34 $\pm$ 0.90 
& 0.000 $\pm$ 0.000 & 1.000 $\pm$ 0.000 & 1.77 $\pm$ 0.63 \\

\addlinespace

linearCaPS 
& 0.001 $\pm$ 0.001 & 0.942 $\pm$ 0.045 & 553.4 $\pm$ 412.8 
& 0.001 $\pm$ 0.001 & 0.968 $\pm$ 0.024 & 262.1 $\pm$ 76.0 
& 0.001 $\pm$ 0.001 & 0.987 $\pm$ 0.010 & 272.1 $\pm$ 31.6 
& 0.001 $\pm$ 0.001 & 0.987 $\pm$ 0.010 & 251.9 $\pm$ 21.8 \\

\bottomrule
\end{tabularx}
\caption{CaPS with linear kernel vs TriOpt on 50 nodes. TriOpt is order of magnitude faster while acheiving better accuracy.}
\label{tab:linearCaPS}
\end{table}

\subsection{Results on Semi-synthetic and Real world data}

\begin{table}[H]
\centering
\scriptsize
\setlength{\tabcolsep}{3pt}

\begin{tabular}{lcccccc}
\toprule
& \multicolumn{2}{c}{\textbf{ARTH150}}
& \multicolumn{2}{c}{\textbf{Syntren}}
& \multicolumn{2}{c}{\textbf{Sachs}} \\
\cmidrule(lr){2-3} \cmidrule(lr){4-5} \cmidrule(lr){6-7}
\textbf{Model}
& \textbf{AID} $\downarrow$
& \textbf{F1} $\uparrow$
& \textbf{AID} $\downarrow$
& \textbf{F1} $\uparrow$
& \textbf{AID} $\downarrow$
& \textbf{F1} $\uparrow$ \\
\midrule

DAGMA
& $402.0 \pm 0.0$
& $0.447 \pm 0.00$
& $57.7 \pm 28.61$
& $0.17 \pm 0.09$
& $\mathbf{24.0 \pm 0.0}$
& $0.250 \pm 0.000$ \\

GOLEM
& $531.0 \pm 0.0$
& $0.125 \pm 0.00$
& $65.0 \pm 35.36$
& $0.16 \pm 0.07$
& $43.4 \pm 1.3$
& $0.264 \pm 0.064$ \\

NOTEARS
& $416.6 \pm 1.67$
& $0.388 \pm 0.01$
& $53.0 \pm 8.49$
& $0.20 \pm 0.15$
& $\mathbf{24.0 \pm 0.0}$
& $\mathbf{0.412 \pm 0.000}$ \\

TriOpt w/o SM
& $\mathbf{329.0 \pm 0.0}$
& $\mathbf{0.54 \pm 0.00}$
& $\mathbf{43.4 \pm 15.97}$
& $\mathbf{0.23 \pm 0.12}$
& $26.6 \pm 0.5$
& $0.283 \pm 0.046$ \\

TriOpt with SM
& $\mathbf{329.0 \pm 0.0}$
& $\mathbf{0.54 \pm 0.00}$
& $44.4 \pm 14.04$
& $0.22 \pm 0.11$
& $27.0 \pm 0.0$
& $0.250 \pm 0.000$ \\

\bottomrule
\end{tabular}
\caption{Performance comparison on ARTH150, Syntren, and Sachs datasets using AID and F1}
\label{tab:merged_aid_f1}
\vspace{-10pt}
\end{table}

\subsection{Results for All 3 Noise Types}
\label{full_result}
This section shows the results for 3 different noise types (Gaussian EV, Gumbel and Exponential) for 5000 and 2000 samples. TriOpt consistently outperforms baselines in terms of runtime. In terms of SHD, F1 and AID, TriOpt has the best performance in all noise types of ER4 and SF4 graphs across all node numbers. For ER1 and ER2 graphs, TriOpt achieved the best performance in most cases; in the few exceptions, the difference from the top-performing method and TriOpt was negligible.

\begin{figure*}[htbp]
    \centering
    \begin{subfigure}[t]{0.49\textwidth}
        \centering
        \includegraphics[width=\linewidth]{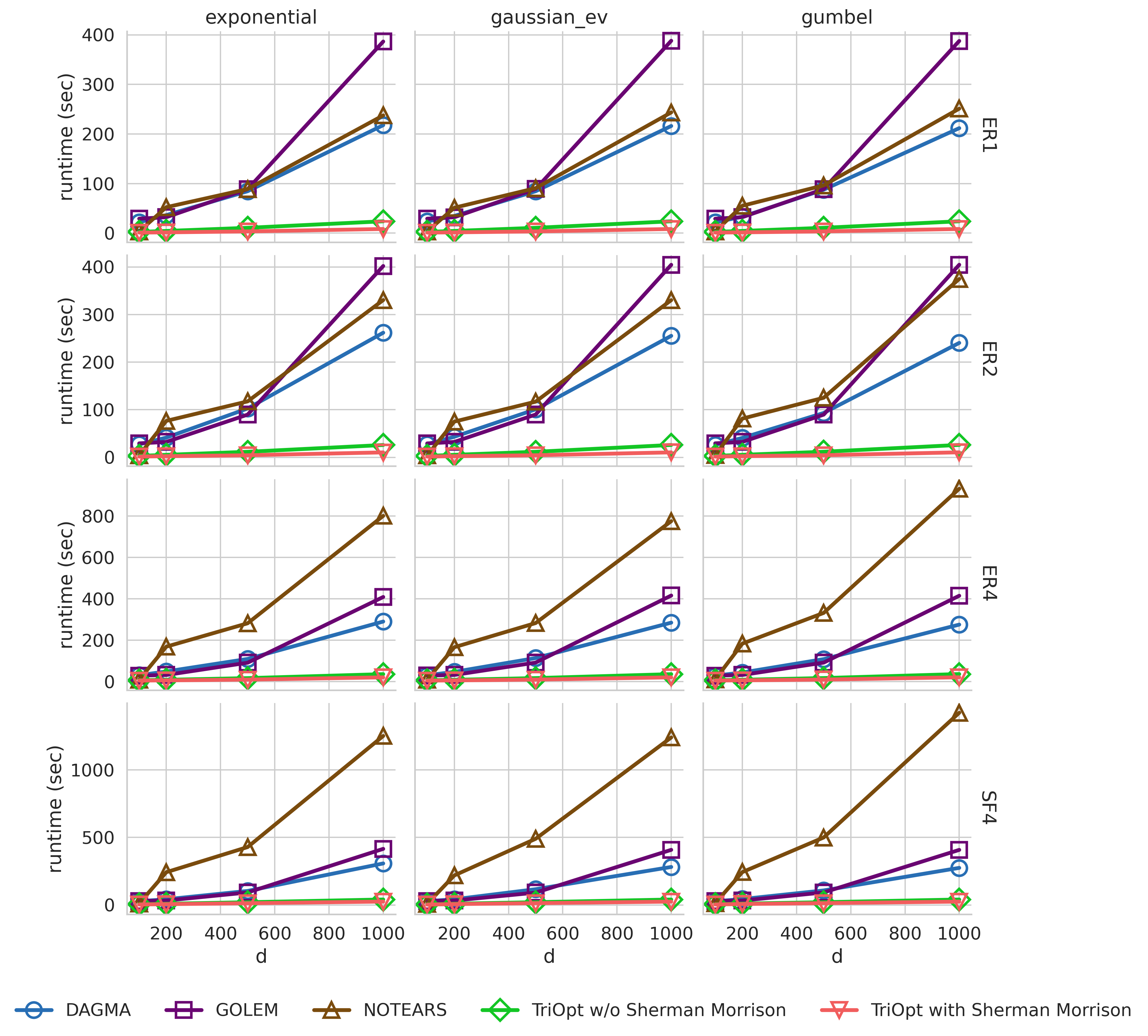}
        \caption{\scriptsize Runtime (Lower Better)}
    \end{subfigure}
    \hfill
    \begin{subfigure}[t]{0.49\textwidth}
        \centering
        \includegraphics[width=\linewidth]{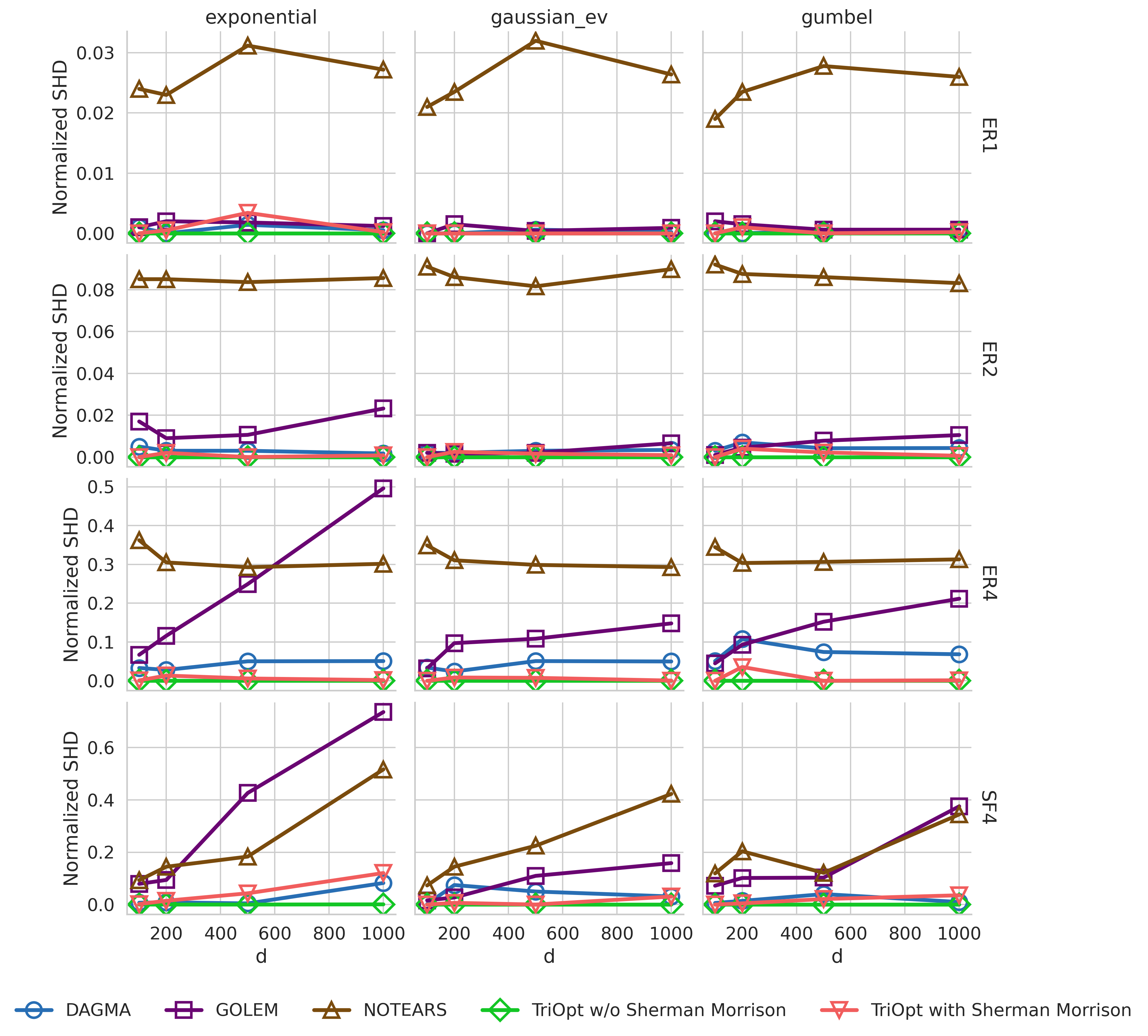}
        \caption{\scriptsize SHD (Lower Better)}
    \end{subfigure}

    \vspace{0.6em}

    \begin{subfigure}[t]{0.49\textwidth}
        \centering
        \includegraphics[width=\linewidth]{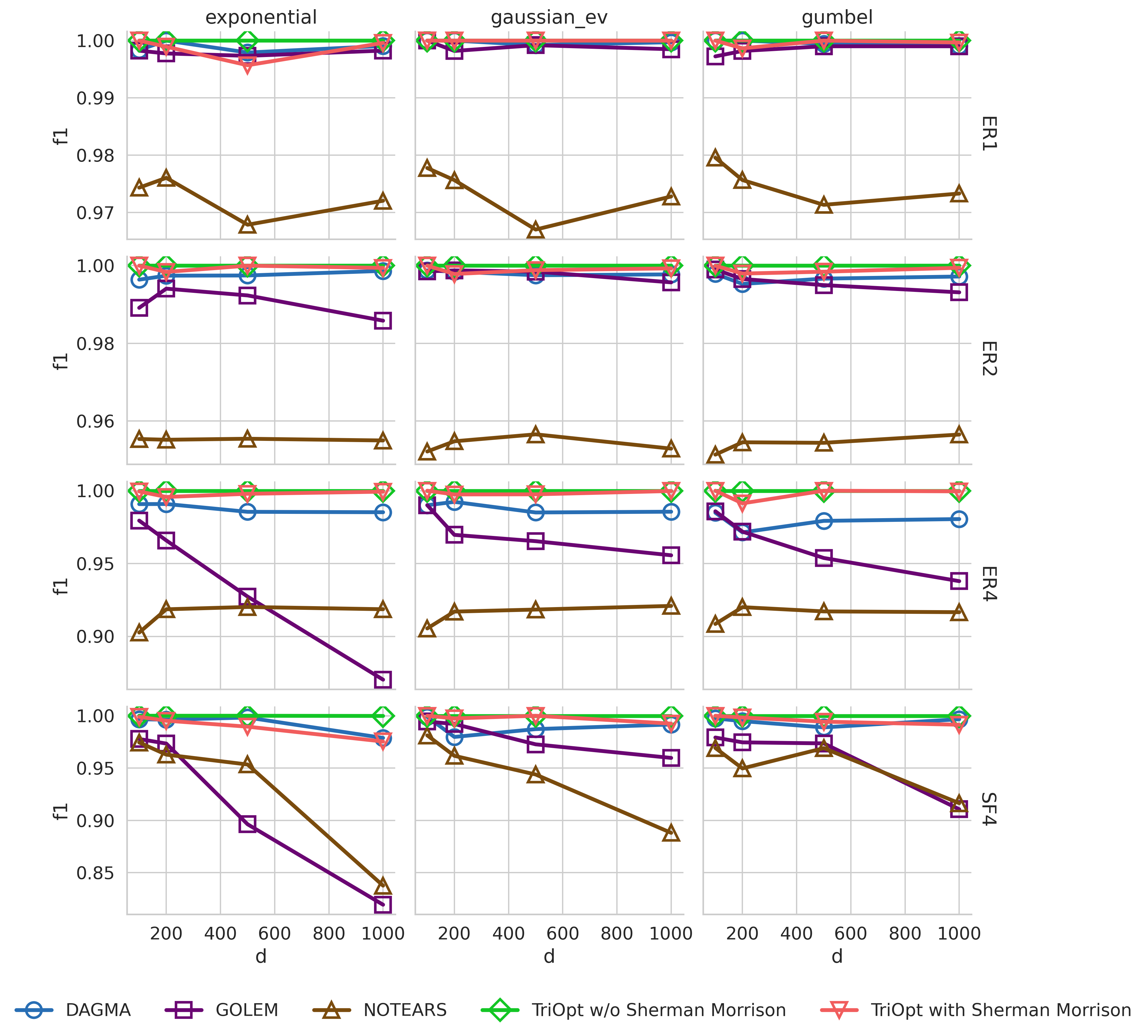}
        \caption{\scriptsize F1 score (Higher Better)}
    \end{subfigure}
    \hfill
    \begin{subfigure}[t]{0.49\textwidth}
        \centering
        \includegraphics[width=\linewidth]{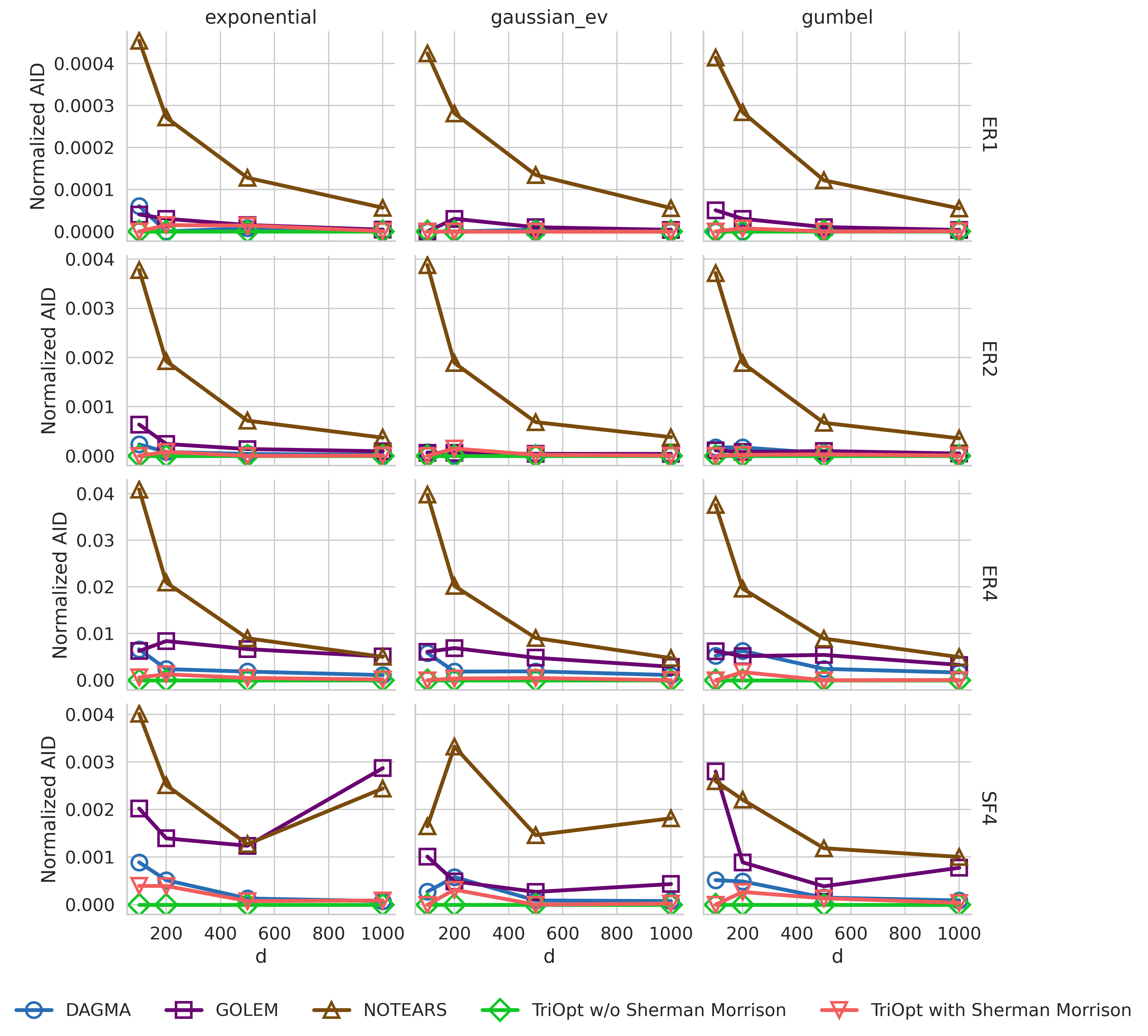}
        \caption{\scriptsize AID (Lower Better)}
    \end{subfigure}

    \caption{Performance comparison on 5000 samples.
    Results are shown for (a) runtime, (b) structural Hamming distance (SHD),
    (c) F1 score, and (d) AID, evaluated across 2 versions of TriOpt and 2 baselines GOLEM and DAGMA.}
    \label{fig:performance_5000}
\end{figure*}

\begin{figure*}[htbp]
    \centering
    \begin{subfigure}[t]{0.49\textwidth}
        \centering
        \includegraphics[width=\linewidth]{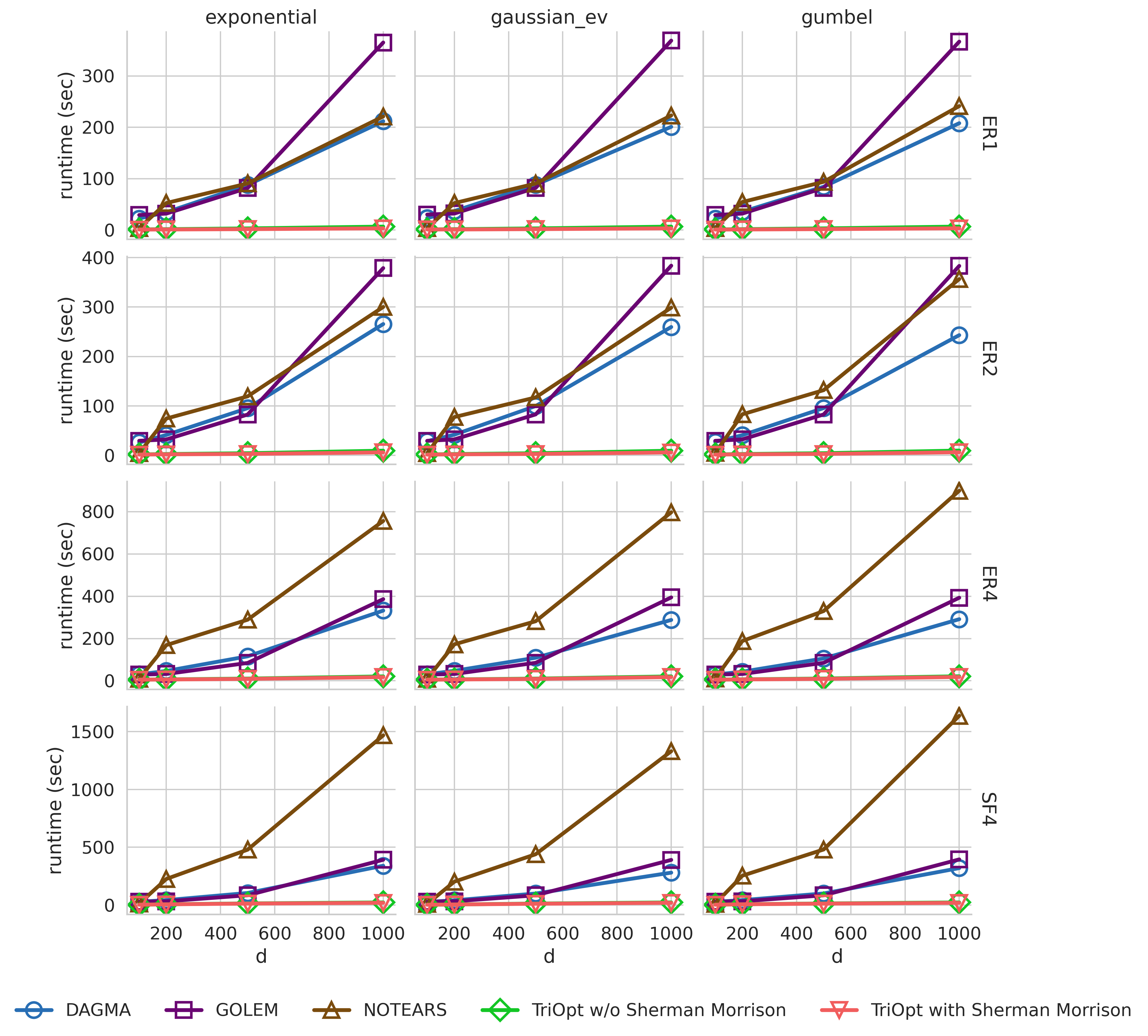}
        \caption{\scriptsize Runtime (Lower Better)}
    \end{subfigure}
    \hfill
    \begin{subfigure}[t]{0.49\textwidth}
        \centering
        \includegraphics[width=\linewidth]{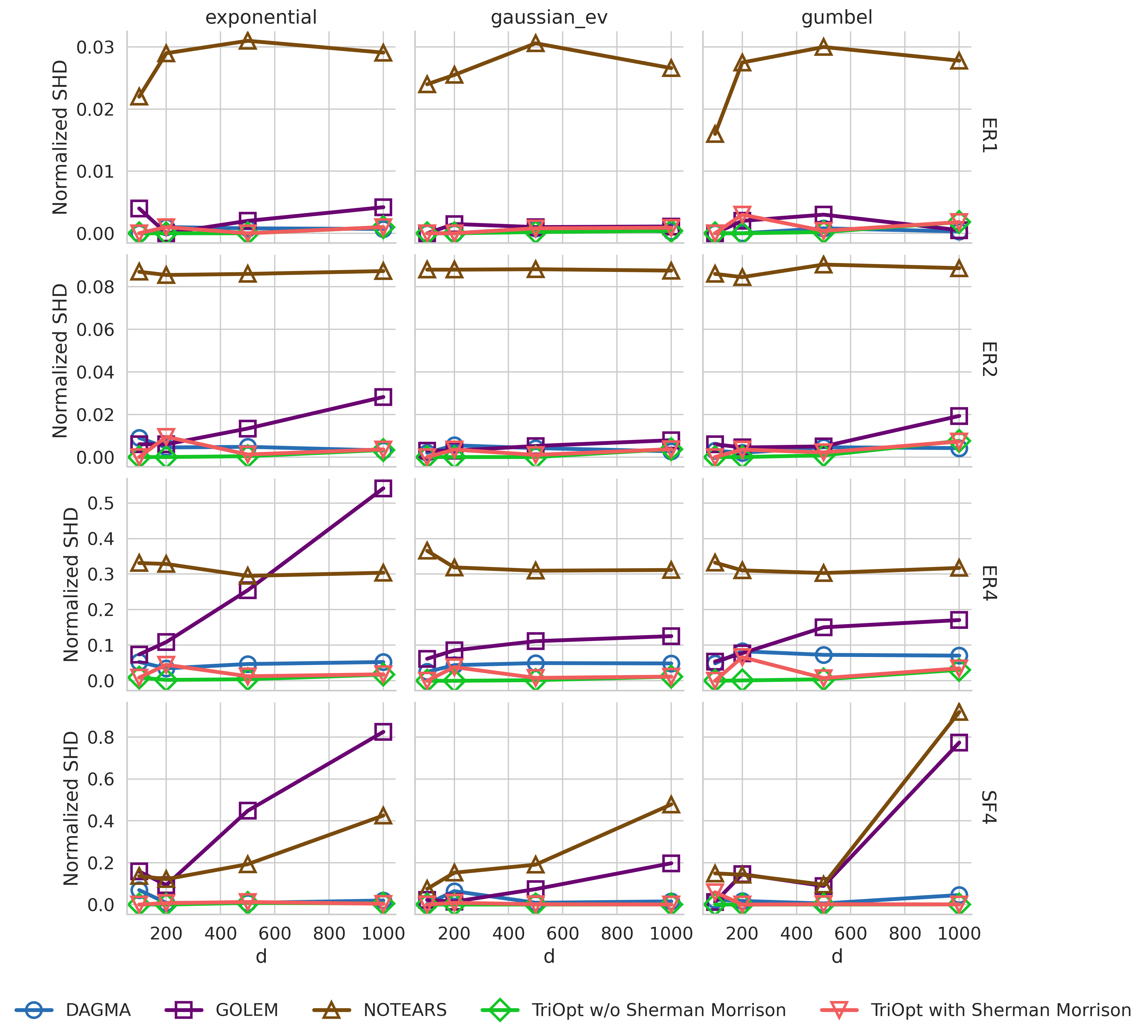}
        \caption{\scriptsize SHD (Lower Better)}
    \end{subfigure}

    \vspace{0.6em}

    \begin{subfigure}[t]{0.49\textwidth}
        \centering
        \includegraphics[width=\linewidth]{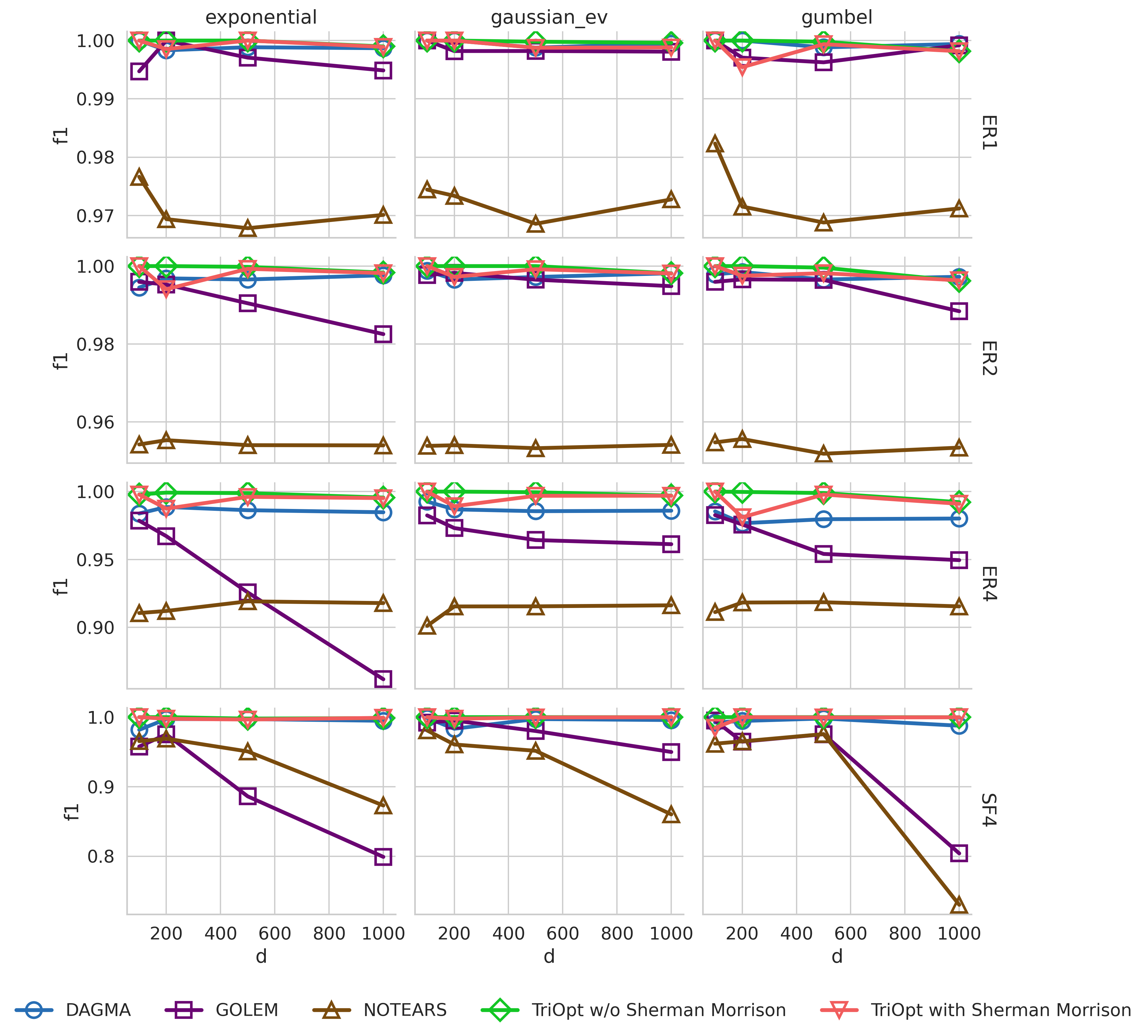}
        \caption{\scriptsize F1 score (Higher Better)}
    \end{subfigure}
    \hfill
    \begin{subfigure}[t]{0.49\textwidth}
        \centering
        \includegraphics[width=\linewidth]{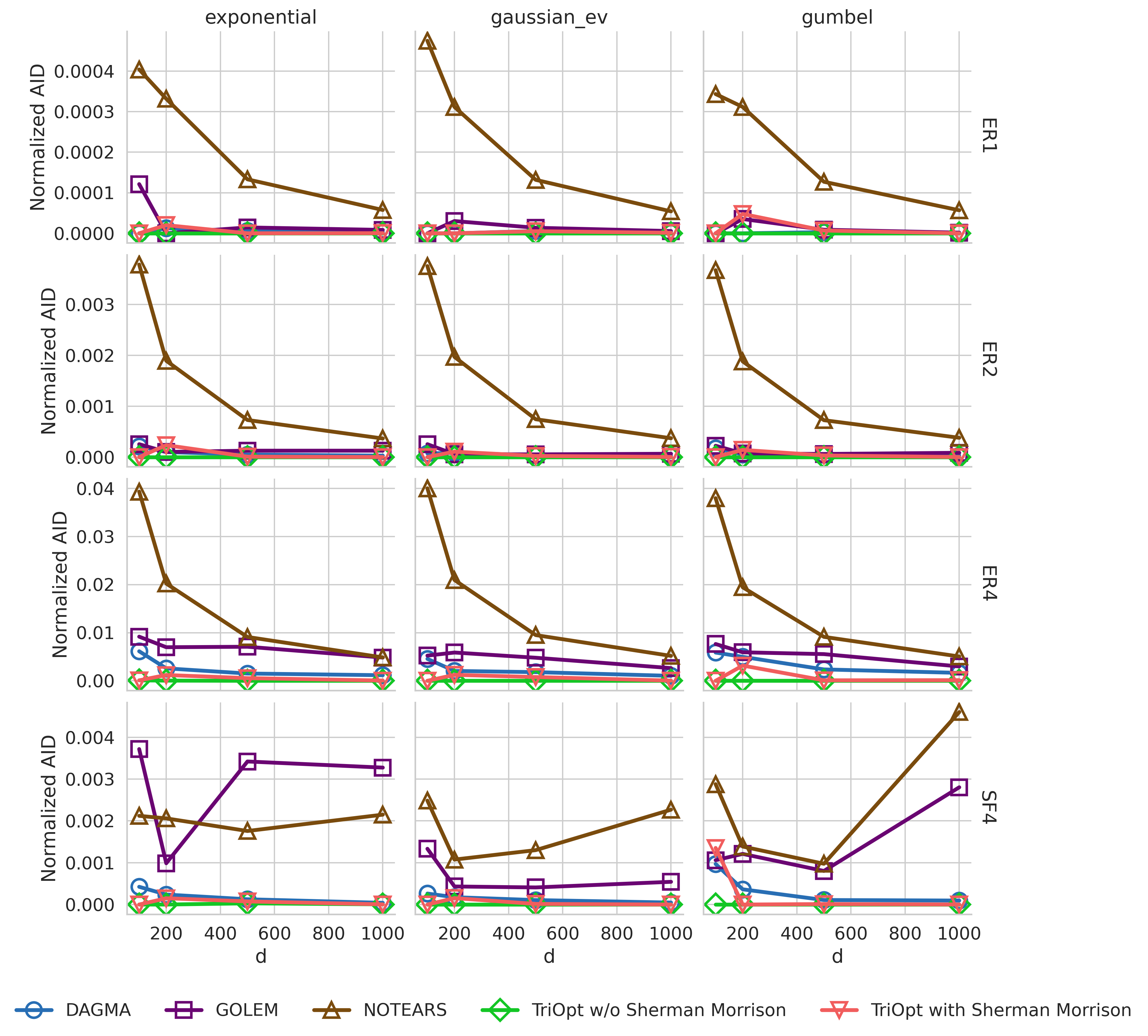}
        \caption{\scriptsize AID (Lower Better)}
    \end{subfigure}

    \caption{Performance comparison on 2000 samples.
    Results are shown for (a) runtime, (b) structural Hamming distance (SHD),
    (c) F1 score, and (d) AID, evaluated across 2 versions of TriOpt and 2 baselines GOLEM and DAGMA.}
    \label{fig:performance_2000}
\end{figure*}


\end{document}